\documentclass[10pt]{article} %
\PassOptionsToPackage{dvipsnames}{xcolor}
\usepackage[preprint]{tmlr}

\usepackage{graphicx} %
\usepackage{notation}
\usepackage{tikz}
\usetikzlibrary{tikzmark, calc, backgrounds}
\usepackage{enumitem}
\usepackage{listings}
\usepackage{minted}
\usepackage{booktabs}
\usepackage{wrapfig}
\usepackage{amsthm}
\usepackage{amssymb}
\usepackage{natbib}
\usepackage{caption}
\usepackage{multirow}
\usepackage{booktabs}
\usepackage{array}
\usepackage{cancel}
\usepackage[ruled, vlined, linesnumbered]{algorithm2e}
\usepackage[american]{babel}
\usepackage{mathtools}
\usepackage[dvipsnames]{xcolor}
\usepackage[most]{tcolorbox}
\usepackage[toc, page, header]{appendix}
\usepackage{minitoc}
\usepackage{listings}
\usepackage{longtable}
\usepackage{adjustbox}
\usepackage{etoolbox}
\usepackage{csquotes}
\usepackage{makecell}
\usepackage[backref=page,colorlinks=true,citecolor=blue]{hyperref}
\usepackage{cleveref}

\renewcommand{\mkbegdispquote}[2]{\itshape}

\tcbuselibrary{listingsutf8}

\crefname{figure}{Fig.}{Figs.}
\crefname{definition}{Def.}{Defs.}
\crefname{corollary}{Cor.}{Cors.}
\crefname{proposition}{Prop.}{Props.}
\crefname{observation}{Obs.}{Obs.}
\crefname{theorem}{Thm.}{Thms.}
\crefname{remark}{Remark}{Remarks}
\crefname{principle}{Principle}{Principles}
\crefname{lemma}{Lemma}{Lemmata}
\crefname{claim}{Claim}{Claims}
\crefname{table}{Tab.}{Tabs.}
\crefname{section}{Sec.}{Sec.}
\crefname{subsection}{Sec.}{Sec.}
\crefname{subsubsection}{Sec.}{Sec.}
\crefname{assumption}{Assumption}{Assumptions}
\crefname{appendix}{App.}{App.}
\crefname{algorithm}{Alg.}{Algs.}
\crefname{equation}{}{}
\crefname{enumi}{Step}{Steps}
\crefname{example}{Example}{Examples}

\newtheoremstyle{plain-sc}
  {6pt}   %
  {6pt}   %
  {\itshape}
  {}
  {\scshape}
  {.}
  { }
  {}
\theoremstyle{plain-sc}
\newtheorem{proposition}{Proposition}

\def\app#1#2{%
  \mathrel{%
    \setbox0=\hbox{$#1\sim$}%
    \setbox2=\hbox{%
      \rlap{\hbox{$#1\propto$}}%
      \lower1.2\ht0\box0%
    }%
    \raise0.25\ht2\box2%
  }%
}

\title{Adaptive Inverted-Index Routing \\ for Granular Mixtures-of-Experts}
      
\author{
  Klaus-Rudolf Kladny\textsuperscript{1,2}
  \And
  Maximilian Mordig\textsuperscript{1,2,3}
  \And
  Bernhard Schölkopf\textsuperscript{1,2,3,4}
  \And
  Michael Muehlebach\textsuperscript{1} \\ \\
  \textsuperscript{1} Max Planck Institute for Intelligent Systems, Tübingen \\
  \textsuperscript{2} Tübingen AI Center \\
  \textsuperscript{3} ETH Zurich \\
  \textsuperscript{4} ELLIS Institute Tübingen \\
  \texttt{\{kkladny,mmordig,bs,michaelm\}@tue.mpg.de}
}

\newcommand{\ourmethod}{AIR-MoE}

\definecolor{myblue}{RGB}{0, 106, 43} %
\newcommand{\nicecomment}[1]{\hfill\textcolor{myblue}{\texttt{// #1}}}
\newcommand{\nicefullcomment}[1]{\textcolor{myblue}{\texttt{// #1}}}

\newtcolorbox{codebox}{
  enhanced,
  colback=myblue!5,
  boxrule=0.4pt, arc=2mm,
  left=1mm, right=1mm, top=0.6mm, bottom=0.6mm,
  before skip=6pt, after skip=6pt
}

\newtcolorbox{centralequation}[1][]{
  colback=myblue!6,
  rounded corners,
  boxrule=0.1pt,
  #1 %
}

\newtcolorbox{remarkbox}[1][]{
  colback=white,
  rounded corners,
  boxrule=0.5pt,
  #1 %
}

\newcolumntype{C}[1]{>{\centering\arraybackslash}p{#1}}

\newcommand{\algmark}[1]{\tikzmark{#1}}

\begin{document}

\renewcommand \partname{}
\renewcommand \thepart{}

\doparttoc
\faketableofcontents

\maketitle

\begin{abstract}
Mixture-of-experts (MoE) models enable scalable transformer architectures by activating only a subset of experts per token. Recent evidence suggests that performance improves with increasingly granular experts, i.e., many small experts instead of a few large ones. However, this regime substantially increases routing cost, which can dominate computation. We introduce the \textit{\underline{a}daptive \underline{i}nverted-index \underline{r}outing for MoE}~(\ourmethod{}), an inverted-index-inspired routing architecture based on vector quantization (VQ). In a first stage, \ourmethod{} performs \emph{coarse shortlisting} by assigning tokens to VQ codewords to construct a candidate set of experts. In a second stage, \emph{fine scoring} computes exact routing scores restricted to this shortlist. This two-stage procedure \textit{approximates} true top-$K$ routing while avoiding full expert scoring and, in contrast to prior work, imposing no structural constraints on expert parameters. \ourmethod{} serves as a drop-in replacement for standard routers and requires no modifications to the model architecture or loss function. We further provide a lower bound on the mass recall achieved by \ourmethod{} that yields insights into the inner workings. Empirically, \ourmethod{} improves the perplexity–FLOPs trade-off over existing efficient granular MoE routers, with consistent perplexity improvements up to 10 \% over the best baseline.
\end{abstract}

\section{Introduction}

\begin{figure}[t]
    \centering
    \includegraphics[width=1.0\textwidth]{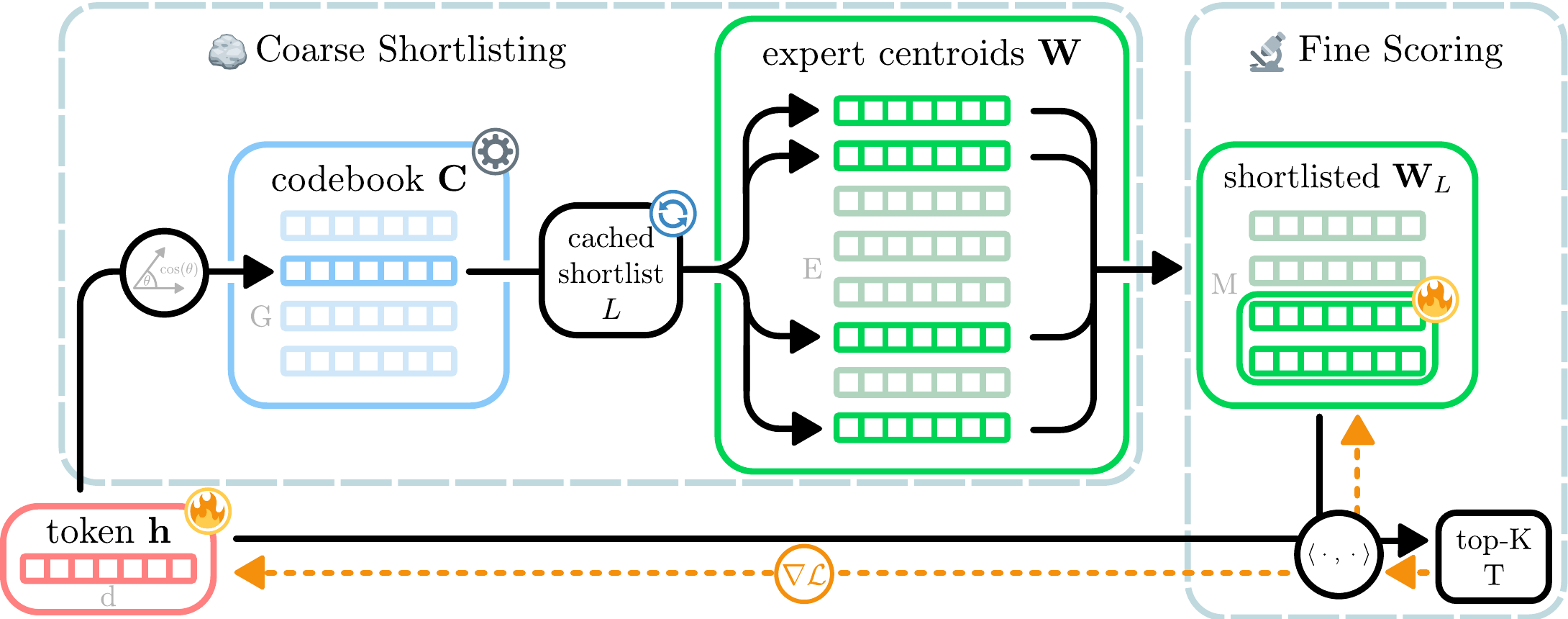}
    \caption{\textbf{Overview of \ourmethod{}}. The method consists of two parts: The coarse shortlisting stage uses vector quantization to select a codeword (blue) that stores a pre-computed expert shortlist $L$ that references specific expert centroids (green) and is updated after each optimizer step. The fine scoring stage takes the shortlisted expert weights and scores them exactly. Notably, the codebook is learned via gradient-free optimization (gear symbol) and only token representations and expert centroids are trained using the downstream gradient (dashed orange) \textit{without} straight-through estimation trick.}
    \label{fig:overview}
\end{figure}

Mixture-of-Experts (MoE;~\citet{jacobs1991adaptive}) is an ensemble learning technique where multiple so-called expert models are jointly trained together with a routing mechanism that, for a given input, predicts weights to create a linear combination over expert outputs. Sparse MoE~\citep{shazeer2017outrageously} is a variant of MoE that has become popular in recent LLM architectures~(e.g.,~\citet{du2022glam, team2025kimi, jiang2024mixtral}). In a sparse MoE, only the top-$K$ experts (in terms of router scores) are considered in the linear combination, instead of all of them. This way, computational demands during inference are kept nearly constant as the number of experts grows, while increasing the expressivity of the model.

Empirical scaling laws suggest that the ideal regime uses \textit{many} experts with \textit{few} parameters each—so-called \textit{granular experts}~\citep{krajewski2024scaling}.\footnote{We note that similar observations have been made earlier by~\citet{clark2022unified}.} In practice, training such models is prohibitive because the routing overhead becomes non-negligible as the amount of experts increase. In recommender systems, this problem is known as maximum inner product search (MIPS;~\citet{shrivastava2014asymmetric,abuzaid2019index}): find the maximum (or top-$K$) inner product value(s) between a query vector (token representation) and a large amount of candidate vectors (expert centroids). Prior work in MoE research addresses this challenge either by restricting routing to predefined groups (hierarchical MoE;~\citet{shazeer2017outrageously}) or by imposing structural constraints on expert representations~\citep{he2024mixture}. While these approaches reduce computational costs, they either limit routing flexibility or introduce restrictive assumptions~(see \cref{appx:expert_usage}) that typically degrade performance.

In the present work, we propose to use vector quantization to construct an inverted-index-like routing architecture, visualized in~\cref{fig:overview}. For every token, in parallel, the top-$K$ prediction procedure consists of two steps:
\begin{enumerate}[leftmargin=*]
    \item \underline{Coarse Shortlisting}: Retrieve the closest codeword and gather expert indices from the corresponding pre-evaluated shortlist.
    \item \underline{Fine Scoring}: Score all inner products for token and experts within the retrieved shortlist, then use the top-$K$ expert indices and scores for routing. 
\end{enumerate}

This procedure avoids structural constraints: experts are neither partitioned (shortlists can be overlapping) nor parametrically restricted. Instead of enforcing exact top-$K$ routing via constraints, we deliberately pursue a FLOP-efficient approximation, akin to approximate nearest neighbor search~\citep{indyk1998approximate}.

A central challenge that arises from the two step procedure sketched above is that the coarse shortlisting step is not differentiable. To address this issue, we propose a bi-level optimization approach where the codebook training is gradient-free and decoupled from the training of all other model parameters. Specifically, the codebook is trained via an adaptive spherical k-means that gradually adapts to the internal distribution shift.

In addition to the methodological contribution, we explore the theoretical assumptions under which the top-$K$ approximation of \ourmethod{} is reasonable in terms of retaining probability mass in the routed shortlist (mass recall). Specifically, we show a lower bound on the mass recall that depends on the quantization error and the routing mass that lies outside of the top-M codeword scores.

We summarize our main contributions as follows:

\begin{itemize}[leftmargin=*]
        \item We introduce the \textit{adaptive inverted-index for MoE} (\ourmethod{}), an inverted-index like routing architecture for granular mixture-of-experts based on adaptive vector quantization that approximates the underlying top-$K$ experts without imposing restrictions~(\cref{sec:cf-moe}).
        \item To address the non-differentiability of the codebook, we propose a bi-level optimization strategy that updates the router parameters via gradient descent while learning the codebook using gradient-free adaptive spherical $k$-means~(\cref{sec:training}).
        \item We derive a lower bound on the retained total probability mass of \ourmethod{}, thereby providing an intuition for why and under what conditions the methods can be expected to work well~(\cref{sec:theory}).
        \item We demonstrate empirically that \ourmethod{} tends to perform favorably in comparison to existing routing methods for granular MoE~(\cref{sec:experiments}).
\end{itemize}

\textbf{Overview.}~\cref{sec:prelim} covers the main preliminaries of the manuscript, namely mixtures-of-expert as used in modern LLMs~(\cref{sec:moe_prelim}) and granular MoE~(\cref{sec:granular_prelim}), the motivating scaling law for \ourmethod{}. Thereafter, in~\cref{sec:cf-moe}, we propose \ourmethod{}, specifically forward pass~(\cref{sec:forward_pass}), training~(\cref{sec:training}) and theory~(\cref{sec:theory}). We then cover related work~(\cref{sec:related_work}) and experiments~(\cref{sec:experiments}). Finally, we discuss limitations in~\cref{sec:discussion} and conclude with~\cref{sec:conclusion}.

\section{Preliminaries} \label{sec:prelim}

\subsection{Mixture-of-Experts in LLMs} \label{sec:moe_prelim}

In transformer-based LLMs~\citep{vaswani2017attention}, MoEs are typically applied to feed-forward neural network (FFN) layers with top-$K$ inference~\citep{shazeer2017outrageously} for compute \& memory efficiency:
\begin{equation}
\label{eq:MoE}
    \MoE(\hb)
    \;\coloneqq\;
    \sum_{e \in \mathrm{TopK}\{\gammab(\hb)\}}
    \gamma_e(\hb)\,\FFN_{\phi}^{(e)}(\hb),
\end{equation}
where $\mathrm{TopK}$ denotes the indices of the $k \in \NN$ largest router weights in $\gammab(\hb) \in \RR^E$. The router weight for expert $e$ is defined as
\begin{equation}
\label{eq:expert_score}
    \gamma_e(\hb)
    \;\coloneqq\;
    \frac{\exp\{z_e(\hb)\}}
    {\sum_{e' \in \mathrm{TopK} \{\zb(\hb)\}} \exp\{z_{e'}(\hb)\}}, \quad z_e(\hb)
    \;\coloneqq\;
    \langle \wb_e, \hb \rangle,
\end{equation}
where $\wb_e \in \RR^d$ for all $e \in \{1,2,\dots,E\}$ are learnable expert centroids. A common choice for the FFN parameterization in \cref{eq:MoE} is the gated linear unit architecture of~\citet{shazeer2020glu}.

\subsection{Granular MoE} \label{sec:granular_prelim}

Empirical scaling laws suggest that increasing expert granularity (using many small experts instead of a few large ones) can improve performance~\citep{krajewski2024scaling}. In particular,
\begin{equation}
\label{eq:scaling_law}
    \Lcal(Q)
    \;=\;
    \frac{a}{Q^b} + c,
    \qquad
    Q \;\coloneqq\; \frac{d_{\mathrm{standard}}}{d_{\mathrm{expert}}},
\end{equation}
where $Q$ denotes the granularity with reference dimension $d_{\mathrm{standard}}$ and expert dimension $d_{\mathrm{expert}}$. The variables $a, b, c$ are positive constants. However, this regime substantially increases routing cost, as the router must evaluate inner products $\wb_e^\top \hb$ across many experts.

The next section introduces our approach for reducing this routing overhead.

\section{Adaptive Inverted-Index Routing for Mixture-of-Experts (\ourmethod{})}
\label{sec:cf-moe}
\newcommand{\stagelabel}[1]{\hfill\colorbox{gray!15}{\strut\textbf{#1}}}

\begin{figure}[t]
\centering
\begin{minipage}{0.945\textwidth}
\begin{algorithm}[H]
\LinesNumbered
\IncMargin{0.5em}
\caption{\ourmethod{} Forward Pass} \label{alg:cf_router}
\KwIn{Token batch $\Hb = \{\hb_s\}_{s=1}^S$, codebook $\Cb = \{\cb_g\}_{g=1}^G$, expert centroids $\Wb$, shortlist size $M$, $K$ (for top-$K$ selection), jitter $\epsilon > 0$}
\KwOut{Selected expert indices $T_s$ and scores $\zb_s$}

\BlankLine
$\wb_e \gets \Pi_{\mathbb{S}^d}(\wb_e)$, \quad $\forall e \in [E]$ \nllabel{ln:normalize_centroid} \nicecomment{normalize expert centroids} 
\BlankLine

\algmark{coarse-start}
\For{$s \leftarrow 1$ \KwTo $S$}{
  $g_s \leftarrow \arg\max_{g \in [G]} \cossim (\hb_s, \cb_g)$
}

\If(\nicecomment{refresh only after optimizer update}){shortlist cache is invalid}{
  \For{$g \leftarrow 1$ \KwTo $G$}{
    $\etab_1 \sim \Ncal(\zerob, \epsilon^2 \Ib)$ \nllabel{ln:noise_1} \\
    $L_g \leftarrow \mathrm{TopM}_{e \in [E]}\big(\langle \cb_g, \wb_e \rangle + \etab_1 \big)$ \nllabel{ln:codeword_expert}
  }
}
\algmark{coarse-end}
\BlankLine

\algmark{fine-start}
\For{$s \leftarrow 1$ \KwTo $S$}{
  $\etab_2 \sim \Ncal(\zerob, \epsilon^2 \Ib)$ \nllabel{ln:noise_2} \\
  $T_s, \zb_s \leftarrow \mathrm{TopK}_{e \in L_{g_s}}\big(\langle \hb_s, \wb_e \rangle + \etab_2 \big)$ \nllabel{ln:scoring} \nicecomment{gradients do not pass to $\Cb$}
}
\algmark{fine-end}
\BlankLine
\begin{tikzpicture}[overlay, remember picture]

\begin{pgfonlayer}{background}
  \fill[myblue!50, rounded corners=3pt, fill opacity=0.3] %
    ($(pic cs:coarse-start)+(-0.3em,2.0ex)$)
    rectangle
    ($(pic cs:coarse-end)+(\linewidth,0ex)$);

  \fill[myblue!50, rounded corners=3pt, fill opacity=0.3] %
    ($(pic cs:fine-start)+(-0.3em,2.0ex)$)
    rectangle
    ($(pic cs:fine-end)+(\linewidth,0ex)$);
\end{pgfonlayer}

\node[anchor=north east, font=\bfseries, text=black]
  at ($(pic cs:coarse-start)+(\linewidth,1.5ex)$) {Coarse Shortlisting};

\node[anchor=north east, font=\bfseries, text=black]
  at ($(pic cs:fine-start)+(\linewidth,1.5ex)$) {Fine Scoring};

\end{tikzpicture}
\Return{$\{T_s, \zb_s\}_{s=1}^S$}
\end{algorithm}
\end{minipage}
\end{figure}

The full forward pass of an \ourmethod{} router is shown in~\cref{alg:cf_router}. We stress that in practice, we parallelize~\cref{alg:cf_router} over tokens and codewords and \textit{do not require looping} (see~\cref{appx:implementation_details}). 

\subsection{Forward Pass} \label{sec:forward_pass}

\paragraph{Two-Stage Retrieval.} The essence is a retrieval-style routing that first shortlists a set of candidate experts $L_{g_s}$ (``coarse'') before evaluating the final expert scores within the shortlist set $T$ (``fine''), thereby reducing computational cost. Specifically, this leads to the subset hierarchy
\begin{equation}
    \underbrace{T}_{\text{top-$K$ index set}}
    \subset
    \underbrace{L}_{\text{shortlist}}
    \subset
    \underbrace{[E]}_{\text{full set of experts}}.
\end{equation}

\paragraph{Coarse Shortlisting.}
To obtain a shortlist for an input token $\hb_s \in \mathbb{R}^d$ within a batch of tokens $\Hb \coloneqq \{ \hb_s \}_{s=1}^S$, we employ vector quantization~\citep{gray1984vector, linde1980algorithm}. Specifically, we map each input token $\hb_s$ to a codeword from a codebook $C = \{\cb_1,\dots,\cb_G\}$ of size $G$.\footnote{in practice, we note that $G\ll E$ to obtain computational gains.} We use cosine similarity for the mapping:
\begin{equation} \label{eq:codeword_assignment}
    g_s \;\coloneqq\; \argmax_{g \in [G]} \; \cossim (\hb_s, \cb_g),
    \qquad
    \cossim(\xb, \yb) \coloneqq \frac{\langle \xb, \yb \rangle}{\norm{\xb}\norm{\yb}}.
\end{equation}
The assigned codeword is then given by $c(\hb_s) \coloneqq \cb_{g_s}$. Each codeword $\cb_g$ in turn stores a shortlist $L(\cb_g) $, which is refreshed using the same rule~\cref{eq:codeword_assignment}, after each optimizer step by scanning the entire set of experts. This yields a learned inverted-index over experts, where codewords define coarse cells and each shortlist corresponds to a posting list of candidate experts. While this incurs a computational cost of $\Ocal(EGd)$, the computation is amortized across the effective batch because it is independent of the number of tokens being routed, making the arithmetic cost independent of the number of routed tokens for a fixed optimizer step, and therefore increasingly favorable in large effective-batch regimes.

\paragraph{Fine Scoring.}
After a shortlist is retrieved, the final stage scores each token exactly within the shortlist. In this way, both an approximation $T_s \subset L_g$ of the top-$K$ indices and corresponding scores $\zb_s \in \mathbb{R}^k$ are obtained that represent the active experts and their mixture weights, respectively.

\paragraph{Implementation Details.} To prevent experts from starvation, we employ switch transformer style load balancing and jitter~\citep{fedus2022switch}. In order to encourage exploration both on shortlist and token level, we apply jitter twice to each stage~(\cref{alg:cf_router}, lines~\ref{ln:noise_1},~\ref{ln:noise_2}). Another relevant aspect is the projection of expert centroids onto the unit sphere $\Pi_{\mathbb{S}^{d}}(\wb_e) \coloneqq \wb_e / \norm{\wb_e}$ (\cref{alg:cf_router}, line~\ref{ln:normalize_centroid}). Doing so has been shown to be beneficial by~\citet{nguyen2024statistical}. In addition, the normalization is important to ensure tightness of the centroid approximation~(\cref{appx:proof}, which proves \cref{prop:mass_preservation}). 

\subsection{Training} \label{sec:training}

Training \ourmethod{} separates the gradient-based optimization of model parameters from the gradient-free adaptation of the codebook. The token representations and expert centroids are updated through the downstream language-modeling objective, whereas the codebook tracks the evolving representation distribution via an adaptive spherical $k$-means algorithm. We describe both components below and provide the full optimization loop in~\cref{appx:optimization_loop}.

\paragraph{Gradient-Free Codebook Training.} A key aspect in learning the codebook $\Cb$ is that no gradients flow from the router output to $\Cb$, because $\Cb$ is only used for shortlisting experts~(Alg.~\ref{alg:cf_router}, line~\ref{ln:codeword_expert}), while the actual score computation is done using the token representations $\Hb$~(Alg.~\ref{alg:cf_router}, line~\ref{ln:scoring}). To make sure that a good codebook is learned, we use an adaptive spherical k-means algorithm~(\cref{alg:spherical_kmeans}; adaptation of~\citet{mcqueen1967some}) to minimize the expected quantization error, decoupled from the training of model parameters. The proposed algorithm~(\cref{alg:spherical_kmeans}) combines three ideas:
\begin{enumerate}[leftmargin=*]
    \item \underline{Exponential moving average}~(\citet{van2017neural}; EMA;~\cref{alg:spherical_kmeans}: lines~\ref{lnn:ema_1},~\ref{lnn:ema_2}): To model the gradual drift of the distribution of internal representations, we use an exponential moving average approach. Notably, we perform EMA for both the codewords $\Mb$ and the counts $n$.
    \item \underline{Spherical k-means}~(\citet{dhillon2001concept};~\cref{alg:spherical_kmeans}: lines~\ref{lnn:project_1},~\ref{lnn:project_2}): Spherical k-means updates codewords and data points after projecting them onto the unit sphere $\mathbb{S}^d$. We adopt this technique as it is well-known to yield better results in high dimensions (e.g.,~\citet{lelu2019evaluation}).
    \item \underline{Reinitialization of inactive codes}~(\citet{williams2020hierarchical};~\cref{alg:spherical_kmeans}: line~\ref{lnn:replacement}): A challenge in learning codebooks is that codewords can die, because non-selected codewords are not updated.\footnote{This phenomenon is analogous to dying experts in sparse MoE architectures.} We therefore replace codewords whose estimated average count falls below a threshold $\tau > 0$ by random token representations from the current batch.
\end{enumerate}

\begin{figure}[t]
\centering
\begin{minipage}{0.945\textwidth}
\begin{algorithm}[H] 
\LinesNumbered
\caption{Gradient-Free Codebook Update (Adaptive Spherical K-Means)} \label{alg:spherical_kmeans}
\KwIn{Token batch $\Hb \coloneqq \{\hb_s\}_{s=1}^S \subset \mathbb{R}^d$, codebook $\Cb = \{\cb_g\}_{g=1}^G \subset \mathbb{R}^d$, decay $\gamma \in [0,1)$, running counts $n \coloneqq \{n_g\}_{g=1}^G$, running sums $\Mb \coloneqq \{\mb_g\}_{g=1}^G \subset \mathbb{R}^d$, dead-code threshold $\tau$}
\KwOut{Updated codebook $\Cb$, running counts $n$, running sums $m$}

\BlankLine

\For{$s \leftarrow 1$ \KwTo $S$}{
  $\hb'_s \gets \Pi_{\mathbb{S}^{d}}(\hb_s)$ 
\nllabel{lnn:project_1}
\nicecomment{normalize token state}

  $g_s \leftarrow \arg\max_{g \in [G]} \cossim(\hb'_s, \cb_g)$
  \nicecomment{assign token to nearest codeword}
}

\BlankLine

\For{$g \leftarrow 1$ \KwTo $G$}{
  $n_g^{\mathrm{batch}} \leftarrow \sum_{s=1}^{S} \mathbf{1}[g_s = g]$
  \nicecomment{batch assignment count}
  
  $\mb_g^{\mathrm{batch}} \leftarrow \sum_{s=1}^{S} \mathbf{1}[g_s = g] \, \hb'_s$
  \nicecomment{batch sum of assigned token states}

\BlankLine
$n_g \leftarrow \gamma n_g + (1-\gamma)n_g^{\mathrm{batch}}$
\nllabel{lnn:ema_1}
\nicecomment{EMA update of counts}

$\mb_g \leftarrow \gamma \mb_g + (1-\gamma)\mb_g^{\mathrm{batch}}$
\nllabel{lnn:ema_2}
\nicecomment{EMA update of sums}
\BlankLine

\If{$n_g < \tau$}{
    $u \sim \mathrm{Unif}([S])$
    \nicecomment{sample replacement token}
    
    $\mb_g \leftarrow \hb'_u$
    \nllabel{lnn:replacement}
    
    $n_g \leftarrow 1$
  }
  $\cb_g \leftarrow \Pi_{\mathbb{S}^{d}}\!\left( \mb_g \right)$
  \nllabel{lnn:project_2}
  \nicecomment{update codeword \& normalize}
}

\BlankLine

\Return{$\Cb, n, \Mb$}
\end{algorithm}
\end{minipage}
\end{figure}

\paragraph{Representation and Centroid Training.} The internal representations $\Hb$ and the expert centroids $\Wb$, in contrast to the codebook, obtain a gradient signal from the downstream prediction loss. In contrast to common vector quantization approaches for machine learning~(e.g., \citet{van2017neural,esser2021taming,razavi2019generating}), \ourmethod{} does \textit{not} use the straight-through estimation trick~\citep{bengio2013estimating}. Instead, all gradients come directly from the downstream loss and do not pass the quantization stage, as shown in~\cref{fig:overview}.

\subsection{When does the shortlist preserve routing mass?} \label{sec:theory}

Instead of capturing the exact top-$K$ experts per token, we instead strive for selecting experts whose output probabilities are large. To this end, per token with embedding $\hb$, we define the full routing distribution
\[
\pi_e(\hb)
\;\coloneqq\;
\frac{\exp(z_e(\hb))}{\sum_{j=1}^E \exp(z_j(\hb))},
\qquad e \in [E],
\]
and the retained routing mass
\[
\mathrm{MassRecall}(\hb)
\; \coloneqq \;
\sum_{e \in L(c(\hb))} \pi_e(\hb).
\]

\begin{centralequation}
\begin{proposition}[Routing Mass Preservation]
\label{prop:mass_preservation}
Let $\epsilon(\hb) \coloneqq \|\hb - c(\hb)\|$ and let $L(c(\hb))$ denote the top-$M$ experts under $\langle c(\hb), \wb_e \rangle$. Then
\[
\mathrm{MassRecall}(\hb)
\;\ge\;
\exp(-2\epsilon(\hb))\, \rho_M(c(\hb)),
\]
where
\[
\rho_M(c(\hb))
\;\coloneqq\;
\sum_{e \in L(c(\hb))} \pi_e(c(\hb))
\]
is the routing mass captured by the codeword shortlist.
\end{proposition}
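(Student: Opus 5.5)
The argument is a logit-perturbation comparison: the logits $z_e(\hb)=\langle \wb_e,\hb\rangle$ and $z_e(c(\hb))=\langle \wb_e,c(\hb)\rangle$ differ by at most $\epsilon(\hb)$ uniformly in $e$. This holds because the expert centroids are normalized onto the unit sphere (\cref{alg:cf_router}, line~\ref{ln:normalize_centroid}), which is exactly where that normalization enters. By Cauchy--Schwarz,
\[
|z_e(\hb)-z_e(c(\hb))| = |\langle \wb_e, \hb - c(\hb)\rangle| \le \|\wb_e\|\,\|\hb-c(\hb)\| = \epsilon(\hb)
\]
for every $e\in[E]$.

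Next I would turn this into a multiplicative bound on each softmax probability. Write $\pi_e(\hb)=\exp(z_e(\hb))/Z(\hb)$ with $Z(\hb)=\sum_j \exp(z_j(\hb))$. The numerator satisfies $\exp(z_e(\hb))\ge \exp(-\epsilon(\hb))\exp(z_e(c(\hb)))$. The denominator satisfies $Z(\hb)\le \exp(\epsilon(\hb))Z(c(\hb))$, since each summand is inflated by at most a factor $\exp(\epsilon(\hb))$. Combining the two gives the pointwise bound
\[
\pi_e(\hb) \ge \exp(-2\epsilon(\hb))\,\pi_e(c(\hb)) \quad \text{for all } e\in[E].
\]

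Finally I would sum this inequality over $e\in L(c(\hb))$. The left side becomes $\mathrm{MassRecall}(\hb)$ and the right side becomes $\exp(-2\epsilon(\hb))\rho_M(c(\hb))$, because $L(c(\hb))$ is exactly the index set over which $\rho_M$ is defined. The top-$M$ property of $L(c(\hb))$ is never used; the bound holds for any fixed index set. That property only matters for interpreting $\rho_M$ as large. The jitter in \cref{alg:cf_router} is ignored, since the statement defines $L(c(\hb))$ as the noiseless top-$M$ set.

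None of the steps is a real obstacle; the only care needed is in the first one. The uniform logit bound needs $\|\wb_e\|\le 1$, which is why the centroid normalization matters. It also needs $\epsilon(\hb)$ measured between $\hb$ and the codeword itself, not a rescaled token. The codeword is chosen by cosine similarity, and the codebook lives on the sphere, so $\hb$ need not be normalized. That does not affect the argument, because $\epsilon(\hb)=\|\hb-c(\hb)\|$ is defined directly as the Euclidean gap, and the bound holds as stated for whatever $c(\hb)$ is. It is simply loose when $\|\hb\|$ is far from $1$.
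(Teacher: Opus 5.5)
Your proof is correct and essentially the same as the paper's. Both use Cauchy--Schwarz with $\|\wb_e\|\le 1$ to bound every logit gap by $\epsilon(\hb)$, then apply $\exp(-\epsilon(\hb))$ to the numerator and $\exp(\epsilon(\hb))$ to the partition function. The only difference is cosmetic: you derive the per-expert bound $\pi_e(\hb)\ge \exp(-2\epsilon(\hb))\,\pi_e(c(\hb))$ and then sum over $L(c(\hb))$, whereas the paper bounds the summed ratio directly; your observation that the top-$M$ property is never used holds for the paper's argument too.
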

\end{centralequation}
The bound highlights two factors: (i) the quantization error $\epsilon(\hb)$, which should be small for accurate codebooks, and (ii) the shortlist mass $\rho_M(c(\hb))$, which increases with $M$. Hence, \ourmethod{} retains most routing mass when tokens are well quantized and the shortlist is sufficiently large.

\section{Related Work} \label{sec:related_work}

\paragraph{Granular MoE.} One line of research by~\citet{roller2021hash, dos2024memory} uses fixed (i.e., not learned) hash-table routers in order to increase routing efficiency, though such approaches have been shown both empirically and theoretically to be fundamentally limited~\citep{clark2022unified, dikkala2023benefits}. One more recent method by~\citet{he2024mixture} aims at reducing routing overhead, while keeping routers learnable: The proposed router architecture uses the product-key retrieval method~\citep{lample2019large} to split expert centroids into two parts (which can be evaluated separately), reducing routing overhead from $\Ocal(SEd)$ to $\Ocal(Sd(\sqrt{E} + K^2))$. Just as the present work,~\citet{he2024mixture} use a two-stage retrieval method to score experts. In contrast to the present work, the approach by~\citet{he2024mixture} allows for \textit{exact} retrieval of the top-$K$ experts, while \ourmethod{} is an \textit{approximation} of the top-$K$. However,~\citet{he2024mixture} pay the price of \textit{imposing a specific structure} on the expert centroids $\Wb$, which restricts the space of admissible expert representations: $\Wb$ is computed as a Cartesian product over two prototypes. The expert centroids of \ourmethod{}, in contrast, are not limited and can therefore model token-expert dependencies more flexibly. Another difference is that the method by~\citet{he2024mixture} scales quadratically in the amount of active experts $k$, which necessitates modifying the MoE architecture to compute expert outputs over multiple heads with smaller $k$ for compensation. \ourmethod{}, in contrast, does not require such modifications of the routing architecture and is therefore simpler to use as a drop-in replacement for existing MoE architectures.

\paragraph{Grouping in MoE.} The present work can be categorized into a broader class of approaches based on grouping experts, either with overlapping~\citep{su2024maskmoe, tang2025pangu} or non-overlapping groups~\citep{jordan1991hierarchies, jordan1994hierarchical, shazeer2017outrageously}. Similar to the present work,~\citet{su2024maskmoe, tang2025pangu, xie2023moec} generate groups of experts. However, these groups are not created to reduce computational cost, but instead as a means for avoiding load imbalance~\citet{su2024maskmoe, tang2025pangu} or ``overfitting''~\citep{xie2023moec}. Hierarchical MoEs~\citep{jordan1991hierarchies, jordan1994hierarchical, shazeer2017outrageously} also employ non-overlapping groups to reduce computation. In contrast to \ourmethod{}, however, hierarchical MoEs partition experts into fixed sub-groups, restricting routing to experts within the selected groups. While this avoids the need to recompute codebook–expert scores~(Alg.~\ref{alg:cf_router}, line~\ref{ln:codeword_expert}), it limits expressivity by preventing globally reusable experts. Furthermore, hierarchical MoEs typically learn group selection through additive group and expert scores, whereas \ourmethod{} forms groups via vector quantization.

\paragraph{Vector Quantization in MoE.} A prior work that studies vector quantization in MoE is by~\citet{do2024role}. Like \ourmethod{},~\citet{do2024role} employ vector quantization to encode token vectors. In contrast to the present work, however,~\citet{do2024role} do not use the codewords to create a retrieval index. Instead, vector quantization is used to score experts. Thus, the method by~\citet{do2024role} comes with no computational benefits and does not serve the purpose of granular MoE.

\paragraph{Inverted File Indices \& Differentiable Retrieval.} Our approach is closely related to inverted file indices~(IVF;~\citet{salton1983extended,sivic2003video,johnson2019billion}), which perform approximate nearest neighbor search via a coarse-to-fine strategy. A coarse quantizer partitions the space, and queries are restricted to a small number of cells before performing exact scoring within the corresponding candidate sets. \ourmethod{} follows the same principle for mixture-of-experts routing: learned shortlist embeddings act as coarse centroids, and each shortlist induces a shortlist of candidate experts that are re-scored exactly. In contrast to classical IVF systems, which are typically constructed offline over a fixed database, our method is trained end-to-end and jointly learns both the quantization and expert representations in an adaptive fashion. Recent work has therefore begun to blur the boundary between indexing and retrieval by learning index representations jointly with the retrieval model~\citep{wang2022neural,zhuang2022bridging}. Our method contributes to this direction by enabling FLOP-efficient approximate MIPS in a gradient-compatible manner without restricting the expert space of MoE architectures.

\section{Experiments} \label{sec:experiments}

\begin{table}[t]
    \centering
    \caption{\textbf{Main results.} Perplexity (PPL) and training FLOPs for models with 65{,}536 experts across different model sizes and datasets. FLOPs correspond to the amount of FLOPs at PPL minimum. \underline{Best results are highlighted in bold.} \ourmethod{} consistently achieves a favorable trade-off compared to existing granular MoE approaches, as well as the coarse approach. While the standard granular baseline attains the lowest PPL, it incurs prohibitive computational cost and is therefore shown in gray. All other metrics are shown in~\cref{appx:additional_metrics}}
    \vspace{0.4em}
    \label{tab:main_results}
    
    \begin{tabular}{llcccccc}
    \toprule
     &  & \multicolumn{2}{c}{WikiText-103} & \multicolumn{2}{c}{C5} & \multicolumn{2}{c}{OpenWebText2} \\
    \cmidrule(lr){3-4}
    \cmidrule(lr){5-6}
    \cmidrule(lr){7-8}
    Size & Method & PPL $\downarrow$ & FLOPs $\downarrow$ & PPL $\downarrow$ & FLOPs $\downarrow$ & PPL $\downarrow$ & FLOPs $\downarrow$ \\
    \midrule
    \multirow{5}{*}{Small} & \underline{AIR} & \textbf{21.82} & \textbf{324.1P} & \textbf{131.81} & 625.5P & \textbf{32.14} & 505.0P \\
     & PEER & 22.25 & 352.8P & 145.32 & 626.3P & 33.10 & 505.6P \\
     & Hierarchical & 22.55 & 350.4P & 147.99 & 622.6P & 36.05 & 502.6P \\
     & \textcolor{gray}{Std.\ Granular} & \textcolor{gray}{21.34} & \textcolor{gray}{467.5P} & \textcolor{gray}{132.34} & \textcolor{gray}{817.2P} & \textcolor{gray}{31.56} & \textcolor{gray}{648.3P} \\
     & Std.\ Coarse & 23.84 & 342.5P & 151.99 & \textbf{611.7P} & 36.05 & \textbf{493.9P} \\
    \midrule
    \multirow{5}{*}{Medium} & \underline{AIR} & \textbf{18.62} & 755.9P & \textbf{30.39} & 4.1E & \textbf{20.51} & 3.6E \\
 & PEER & 18.71 & \textbf{751.0P} & 31.60 & 4.1E & 21.30 & \textbf{3.5E} \\
 & Hierarchical & 19.27 & 954.7P & 33.08 & 4.1E & 23.07 & \textbf{3.5E} \\
 & Std.\ Coarse & 18.79 & 836.7P & 32.10 & \textbf{4.0E} & 21.25 & \textbf{3.5E} \\
    \midrule
    \multirow{4}{*}{Large} & \underline{AIR} & -- & -- & \textbf{41.25} & 14.3E & \textbf{16.65} & \textbf{11.2E} \\
 & PEER & -- & -- & 45.37 & 14.0E & 17.88 & \textbf{11.2E} \\
 & Hierarchical & -- & -- & 45.65 & 14.4E & 18.17 & 11.3E \\
 & Std.\ Coarse & -- & -- & 43.13 & \textbf{13.2E} & 16.98 & 11.3E \\
    \bottomrule
    \end{tabular}
\end{table}

We aim to answer the following questions: \textbf{(1)} How does \ourmethod{} compare with existing techniques for granular MoE? \textbf{(2)} How does \ourmethod{} compare with coarse MoE architectures? \textbf{(3)} How do the different components of \ourmethod{} affect performance?

\paragraph{Reproducibility.} All source code, including instructions in the \texttt{README.md}, are available at \href{https://anonymous.4open.science/r/adaptive_inverted_index_routing_code-6C06}{https://anonymous.4open.science/r/adaptive\_inverted\_index\_routing\_code-D6FE/}.

\subsection{Setup}

We describe the core setup here and defer details to~\cref{appx:setup}.

\paragraph{Datasets.}
We evaluate on WikiText-103~\citep{merity2016pointer}, C5~\citep{commoncrawl}, and OpenWebText2~\citep{gao2020pile}.

\paragraph{Baselines.}
(1) Coarse MoE with $K=1$ (matched active parameters), 
(2) granular MoE with standard routing, 
(3) granular PEER routing~\citep{he2024mixture}, 
(4) granular hierarchical MoE~\citep{shazeer2017outrageously}.

\paragraph{Metrics.}
Perplexity (PPL; $\downarrow$); entropy over expert usage $-\EE_{\hat{p}}[\log \hat{p}]$ (Entropy); fraction of dead experts, i.e., experts that do not receive a single token (Dead Experts); overlap fraction between top-$K$ estimate and true top-$K$ (Overlap; only for \ourmethod{}).

\paragraph{Architecture.}
We use the Llama3 transformer~\citep{grattafiori2024llama}, replacing the middle FFN with a MoE layer that consists of tiny experts with a single intermediate dimension~\citep{he2024mixture} and adding post-MoE layer normalization~\citep{ba2016layer,krajewski2024scaling}. We report results for small ($\approx 61\mathrm{M}$ parameters), medium ($\approx 0.27\mathrm{B}$ parameters), and large ($\approx 0.45\mathrm{B}$ parameters) models. For WikiText-103, a relatively small data set, we omit the large model due to strong overfitting.

\paragraph{Training.}
We use standard pre-training hyperparameters (see~\cref{appx:hyperparameters}) and align baselines by active parameters (details in~\cref{appx:baselines}). For WikiText-103, we train for 10 epochs. For C5 and OpenWebText2, we use $10\times$ the amount of model parameters in tokens.

\begin{figure}[t]
    \centering
    \includegraphics[width=\textwidth]{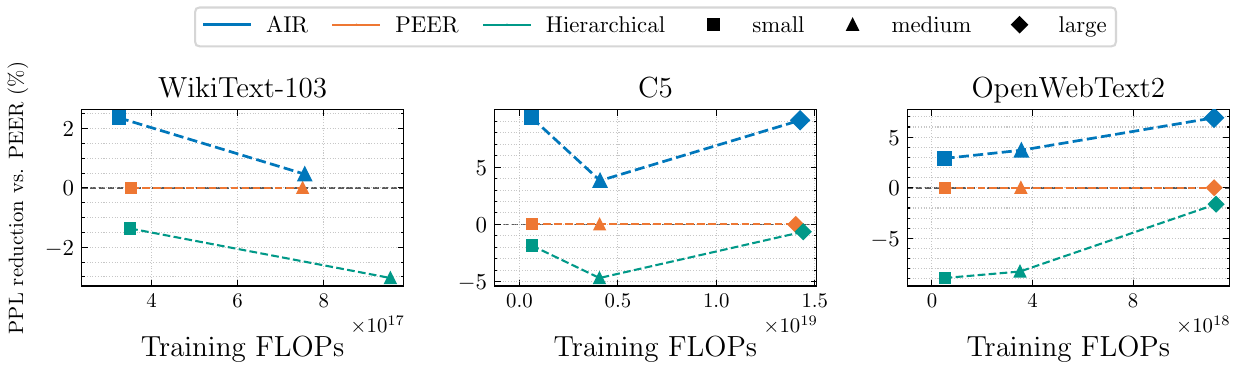}
    \caption{\textbf{Relative Reduction in PPL.} The plots show the relative improvement in PPL with respect to the best baseline (PEER). \underline{Larger means better}.~\ourmethod{} achieves consistent PPL improvements (up to $10\%$) in comparison to the PEER baseline, across model sizes and data sets.}
    \label{fig:performance_vs_flops}
\end{figure}

\subsection{Main Results}

The main results, shown in~\cref{tab:main_results} and~\cref{fig:performance_vs_flops}, demonstrate the merits of the proposed routing architecture:~\ourmethod{} yields consistent gains over the considered efficient granular MoE baselines across datasets and model sizes.\footnote{On C5, the large models obtain worse PPL than the medium models across all routing methods. Inspection of the validation curves suggests that this is an optimization effect under the fixed token budget and learning-rate schedule.} While \ourmethod{} performs slightly worse than the standard granular approach, which scores all expert scores exactly, the latter is substantially more expensive in terms of training FLOPs; for example, on WikiText-103 Small, standard granular routing requires $467.5\mathrm{P}$ FLOPs compared to $324.1\mathrm{P}$ for \ourmethod{}. Compared to hierarchical and PEER routers, \ourmethod{} achieves persistent PPL reductions of up to $10\%$ relative to PEER at similar FLOP cost~(\cref{fig:performance_vs_flops}).

\subsection{Ablation Studies}
We ablate key components of \ourmethod{} on the small model using WikiText-103.

\paragraph{Ablations.}
We consider the following variants: (1) No centroid projection. We remove the projection of expert centroids onto the unit sphere in~\cref{alg:cf_router}, thereby decoupling routing from cosine geometry. (2) Non-adaptive codebook. The codebook is initialized from randomly selected training tokens and kept fixed, removing adaptivity to the evolving representation space. (3) Expert-choice gating. We replace standard load balancing~\citep{fedus2022switch} with expert-choice routing~\citep{zhou2022mixture}. Experts outside the shortlist receive a routing score of $-\infty$. (4) Euclidean assignment. We replace spherical $k$-means with Euclidean clustering by removing normalization in~\cref{alg:spherical_kmeans}.

\paragraph{Results.}
As shown in~\cref{tab:ablations}, \ourmethod{} achieves the best perplexity and lowest fraction of dead experts. Removing normalization has some negative effect on perplexity, while having no considerable effect in terms of FLOPs. Using Euclidean distance similarly slightly decreases perplexity and expert usage at comparable FLOPs. Expert-choice gating attains the highest overlap, but at the cost of severe under-utilization ($78.6\%$ dead experts), showing that overlap alone is not a reliable proxy for effective routing. The static codebook achieves the worst overall results, confirming the effectivity of the adaptive codebook.

\begin{table} \centering \caption{\textbf{Ablation results.} Results shown for WikiText-103, small model. The results confirm that the different components of \ourmethod{} are relevant.} \label{tab:ablations} \resizebox{\textwidth}{!}{  
\begin{tabular}{l c c c c c}
\toprule
Ablation & PPL $\downarrow$ & FLOPs $\downarrow$ & Dead Experts $\downarrow$ & Overlap $\uparrow$ & Entropy $\uparrow$ \\
\midrule
\underline{AIR} & \textbf{21.72} & \textbf{324.1P} & \textbf{0.0\%} & 0.64 & 10.80 \\
Euclidean & 21.83 & \textbf{324.1P} & 0.3\% & 0.64 & \textbf{10.81} \\
Expert Choice & 22.77 & 357.5P & 78.6\% & \textbf{0.74} & 9.53 \\
Missing Normalization & 21.96 & \textbf{324.1P} & 0.7\% & 0.66 & 10.73 \\
Static Code & 23.27 & 360.1P & 42.4\% & 0.07 & 10.24 \\
\bottomrule
\end{tabular}
} \end{table}

\section{Discussion \& Limitations} \label{sec:discussion}

\paragraph{Shortlist Updates.} A distinguishing feature of \ourmethod{} is that, unlike prior approaches, it updates all shortlists after each optimizer step~(\cref{sec:forward_pass}). This incurs a computational cost of $\Ocal(EGd)$ per update, where $G$ denotes the codebook size and $E$ the number of experts. In practice, this overhead is typically tolerable because it is independent of the effective batch size (that is, the micro batch size multiplied by the amount of gradient accumulation steps). Nevertheless, the resulting FLOP savings depend strongly on $G$, $E$, and the effective batch size $S$.

\paragraph{Beyond Router FLOPs.} Following prior work~\citep{he2024mixture}, this work primarily studies the trade-off between perplexity and training FLOPs. We emphasize, however, that FLOP reductions alone do not guarantee proportional wall-clock speedups without appropriate hardware support and optimized implementations~\citep{fedus2022switch,lepikhin2020gshard}. This is particularly relevant in the granular MoE regime, where using a large number of experts can introduce substantial router-independent overhead due to memory traffic, indexing, and irregular expert usage. Our current implementation does not fully eliminate these systems-level costs. Bridging this gap through hardware-aware implementations and optimized kernels is an important direction for future work.

\paragraph{Dimensionality Reduction.}~\ourmethod{} works in the original token space without dimensionality reduction. Another orthogonal direction is to reduce the dimensionality of tokens either via learned~(e.g.,~\citet{chi2022representation}) or randomized (e.g.,~\citet{achlioptas2001sampling}) dimensionality reduction before feeding them into the router.

\section{Conclusion} \label{sec:conclusion}

We introduced \ourmethod{}, an inverted-index-inspired routing architecture for granular mixture-of-experts. By using a vector-quantization-based coarse-to-fine scheme, \ourmethod{} approximates top-$K$ routing without fixed grouping or imposing structural constraints on expert representations. We further proposed a bi-level training strategy for learning the adaptive codebook and provided a lower bound on the retained routing mass. Empirically, \ourmethod{} achieves a favorable perplexity--FLOPs trade-off in granular MoE settings, comparing favorably to existing FLOP-efficient routing approaches while remaining substantially cheaper than exact granular routing. Overall, our results suggest that inverted-index structures provide an effective mechanism for routing in sparse mixture-of-experts architectures.

\bibliography{main}

\begin{thebibliography}{53}
\providecommand{\natexlab}[1]{#1}
\providecommand{\url}[1]{\texttt{#1}}
\expandafter\ifx\csname urlstyle\endcsname\relax
  \providecommand{\doi}[1]{doi: #1}\else
  \providecommand{\doi}{doi: \begingroup \urlstyle{rm}\Url}\fi

\bibitem[Abuzaid et~al.(2019)Abuzaid, Sethi, Bailis, and Zaharia]{abuzaid2019index}
Firas Abuzaid, Geet Sethi, Peter Bailis, and Matei Zaharia.
\newblock {To Index or Not to Index: Optimizing Exact Maximum Inner Product Search}.
\newblock \emph{International Conference on Data Engineering}, pp.\  1250--1261, 2019.

\bibitem[Achlioptas et~al.(2001)Achlioptas, McSherry, and Sch{\"o}lkopf]{achlioptas2001sampling}
Dimitris Achlioptas, Frank McSherry, and Bernhard Sch{\"o}lkopf.
\newblock {Sampling Techniques for Kernel Methods}.
\newblock \emph{Advances in Neural Information Processing Systems}, 14, 2001.

\bibitem[Ainslie et~al.(2023)Ainslie, Lee-Thorp, De~Jong, Zemlyanskiy, Lebr{\'o}n, and Sanghai]{ainslie2023gqa}
Joshua Ainslie, James Lee-Thorp, Michiel De~Jong, Yury Zemlyanskiy, Federico Lebr{\'o}n, and Sumit Sanghai.
\newblock {GQA: Training Generalized Multi-Query Transformer Models from Multi-Head Checkpoints}.
\newblock \emph{Empirical Methods in Natural Language Processing}, pp.\  4895--4901, 2023.

\bibitem[Ba et~al.(2016)Ba, Kiros, and Hinton]{ba2016layer}
Jimmy~Lei Ba, Jamie~Ryan Kiros, and Geoffrey~E. Hinton.
\newblock {Layer Normalization}.
\newblock \emph{arXiv preprint arXiv:1607.06450}, 2016.

\bibitem[Bengio et~al.(2013)Bengio, L{\'e}onard, and Courville]{bengio2013estimating}
Yoshua Bengio, Nicholas L{\'e}onard, and Aaron Courville.
\newblock {Estimating or Propagating Gradients Through Stochastic Neurons for Conditional Computation}.
\newblock \emph{arXiv preprint arXiv:1308.3432}, 2013.

\bibitem[Biderman et~al.(2022)Biderman, Bicheno, and Gao]{biderman2022datasheet}
Stella Biderman, Kieran Bicheno, and Leo Gao.
\newblock {Datasheet for the Pile}.
\newblock \emph{arXiv preprint arXiv:2201.07311}, 2022.

\bibitem[Chi et~al.(2022)Chi, Dong, Huang, Dai, Ma, Li, Singhal, Bajaj, Song, and Wei]{chi2022representation}
Zewen Chi, Li~Dong, Shaohan Huang, Damai Dai, Shuming Ma, Bo~Li, Saksham Singhal, Prakhar Bajaj, Xia Song, and Furu Wei.
\newblock {On the Representation Collapse of Sparse Mixture of Experts}.
\newblock \emph{Advances in Neural Information Processing Systems}, 2022.

\bibitem[Clark et~al.(2022)Clark, de~Las~Casas, Guy, Mensch, Paganini, Hoffmann, Damoc, Hechtman, Cai, Borgeaud, et~al.]{clark2022unified}
Aidan Clark, Diego de~Las~Casas, Aurelia Guy, Arthur Mensch, Michela Paganini, Jordan Hoffmann, Bogdan Damoc, Blake Hechtman, Trevor Cai, Sebastian Borgeaud, et~al.
\newblock {Unified Scaling Laws for Routed Language Models}.
\newblock \emph{International Conference on Machine Learning}, pp.\  4057--4086, 2022.

\bibitem[{Common Crawl Foundation}(2025)]{commoncrawl}
{Common Crawl Foundation}.
\newblock Common crawl creative commons corpus (c5).
\newblock \emph{Common Crawl}, 2025.
\newblock Available at https://huggingface.co/datasets/BramVanroy/CommonCrawl-CreativeCommons, accessed 2025-08-11.

\bibitem[Dhillon \& Modha(2001)Dhillon and Modha]{dhillon2001concept}
Inderjit~S. Dhillon and Dharmendra~S. Modha.
\newblock {Concept Decompositions for Large Sparse Text Data Using Clustering}.
\newblock \emph{Machine Learning}, 42\penalty0 (1):\penalty0 143--175, 2001.

\bibitem[Dikkala et~al.(2023)Dikkala, Ghosh, Meka, Panigrahy, Vyas, and Wang]{dikkala2023benefits}
Nishanth Dikkala, Nikhil Ghosh, Raghu Meka, Rina Panigrahy, Nikhil Vyas, and Xin Wang.
\newblock {On the Benefits of Learning to Route in Mixture-of-Experts Models}.
\newblock \emph{Conference on Empirical Methods in Natural Language Processing}, pp.\  9376--9396, 2023.

\bibitem[Do et~al.(2024)Do, Pham, Le, and Tran]{do2024role}
Giang Do, Kha Pham, Hung Le, and Truyen Tran.
\newblock {On the Role of Discrete Representation in Sparse Mixture of Experts}.
\newblock \emph{arXiv preprint arXiv:2411.19402}, 2024.

\bibitem[Du et~al.(2025)Du, Yin, Xing, Qu, Wang, Chen, Zhang, Du, Wei, et~al.]{team2025kimi}
Angang Du, Bohong Yin, Bowei Xing, Bowen Qu, Bowen Wang, Cheng Chen, Chenlin Zhang, Chenzhuang Du, Chu Wei, et~al.
\newblock {Kimi-VL Technical Report}.
\newblock \emph{arXiv preprint arXiv:2504.07491}, 2025.

\bibitem[Du et~al.(2022)Du, Huang, Dai, Tong, Lepikhin, Xu, Krikun, Zhou, Yu, Firat, et~al.]{du2022glam}
Nan Du, Yanping Huang, Andrew~M Dai, Simon Tong, Dmitry Lepikhin, Yuanzhong Xu, Maxim Krikun, Yanqi Zhou, Adams~Wei Yu, Orhan Firat, et~al.
\newblock {GLaM: Efficient Scaling of Language Models with Mixture-of-Experts}.
\newblock \emph{International Conference on Machine Learning}, pp.\  5547--5569, 2022.

\bibitem[Esser et~al.(2021)Esser, Rombach, and Ommer]{esser2021taming}
Patrick Esser, Robin Rombach, and Björn Ommer.
\newblock {Taming Transformers for High-Resolution Image Synthesis}.
\newblock \emph{Conference on Computer Vision and Pattern Recognition}, pp.\  12873--12883, 2021.

\bibitem[Fedus et~al.(2022)Fedus, Zoph, and Shazeer]{fedus2022switch}
William Fedus, Barret Zoph, and Noam Shazeer.
\newblock {Switch Transformers: Scaling to Trillion Parameter Models with Simple and Efficient Sparsity}.
\newblock \emph{Journal of Machine Learning Research}, 23\penalty0 (120):\penalty0 1--39, 2022.

\bibitem[Gao et~al.(2020)Gao, Biderman, Black, Golding, Hoppe, Foster, Phang, He, Thite, Nabeshima, et~al.]{gao2020pile}
Leo Gao, Stella Biderman, Sid Black, Laurence Golding, Travis Hoppe, Charles Foster, Jason Phang, Horace He, Anish Thite, Noa Nabeshima, et~al.
\newblock The pile: An 800gb dataset of diverse text for language modeling.
\newblock \emph{arXiv preprint arXiv:2101.00027}, 2020.

\bibitem[Grattafiori et~al.(2024)Grattafiori, Dubey, Jauhri, Pandey, Kadian, Al-Dahle, Letman, Mathur, Schelten, Vaughan, et~al.]{grattafiori2024llama}
Aaron Grattafiori, Abhimanyu Dubey, Abhinav Jauhri, Abhinav Pandey, Abhishek Kadian, Ahmad Al-Dahle, Aiesha Letman, Akhil Mathur, Alan Schelten, Alex Vaughan, et~al.
\newblock {The Llama 3 Herd of Models}.
\newblock \emph{arXiv preprint arXiv:2407.21783}, 2024.

\bibitem[Gray(1984)]{gray1984vector}
Robert Gray.
\newblock {Vector Quantization}.
\newblock \emph{IEEE Assp Magazine}, 1\penalty0 (2):\penalty0 4--29, 1984.

\bibitem[He(2024)]{he2024mixture}
Xu~Owen He.
\newblock {Mixture of a Million Experts}.
\newblock \emph{arXiv preprint arXiv:2407.04153}, 2024.

\bibitem[Indyk \& Motwani(1998)Indyk and Motwani]{indyk1998approximate}
Piotr Indyk and Rajeev Motwani.
\newblock {Approximate nearest neighbors: towards removing the curse of dimensionality}.
\newblock \emph{ACM Symposium on Theory of Computing}, pp.\  604--613, 1998.

\bibitem[Jacobs et~al.(1991)Jacobs, Jordan, Nowlan, and Hinton]{jacobs1991adaptive}
Robert~A. Jacobs, Michael~I. Jordan, Steven~J. Nowlan, and Geoffrey~E. Hinton.
\newblock {Adaptive Mixtures of Local Experts}.
\newblock \emph{Neural Computation}, 3\penalty0 (1):\penalty0 79--87, 1991.

\bibitem[Jiang et~al.(2024)Jiang, Sablayrolles, Roux, Mensch, Savary, Bamford, Chaplot, Casas, Hanna, Bressand, et~al.]{jiang2024mixtral}
Albert~Q. Jiang, Alexandre Sablayrolles, Antoine Roux, Arthur Mensch, Blanche Savary, Chris Bamford, Devendra~Singh Chaplot, Diego de~las Casas, Emma~Bou Hanna, Florian Bressand, et~al.
\newblock {Mixtral of Experts}.
\newblock \emph{arXiv preprint arXiv:2401.04088}, 2024.

\bibitem[Johnson et~al.(2019)Johnson, Douze, and J{\'e}gou]{johnson2019billion}
Jeff Johnson, Matthijs Douze, and Herv{\'e} J{\'e}gou.
\newblock {Billion-scale similarity search with GPUs}.
\newblock \emph{IEEE Transactions on Big Data}, 7\penalty0 (3):\penalty0 535--547, 2019.

\bibitem[Jordan \& Jacobs(1991)Jordan and Jacobs]{jordan1991hierarchies}
Michael~I. Jordan and Robert~A. Jacobs.
\newblock {Hierarchies of Adaptive Experts}.
\newblock \emph{Advances in Neural Information Processing Systems}, 4, 1991.

\bibitem[Jordan \& Jacobs(1994)Jordan and Jacobs]{jordan1994hierarchical}
Michael~I. Jordan and Robert~A. Jacobs.
\newblock {Hierarchical Mixtures of Experts and the EM Algorithm}.
\newblock \emph{Neural Computation}, 6\penalty0 (2):\penalty0 181--214, 1994.

\bibitem[Krajewski et~al.(2024)Krajewski, Ludziejewski, Adamczewski, Pi{\'o}ro, Krutul, Antoniak, Ciebiera, Kr{\'o}l, Odrzyg{\'o}{\'z}d{\'z}, Sankowski, et~al.]{krajewski2024scaling}
Jakub Krajewski, Jan Ludziejewski, Kamil Adamczewski, Maciej Pi{\'o}ro, Micha{\l} Krutul, Szymon Antoniak, Kamil Ciebiera, Krystian Kr{\'o}l, Tomasz Odrzyg{\'o}{\'z}d{\'z}, Piotr Sankowski, et~al.
\newblock {Scaling Laws for Fine-Grained Mixture of Experts}.
\newblock \emph{International Conference on Machine Learning}, 2024.

\bibitem[Lample et~al.(2019)Lample, Sablayrolles, Ranzato, Denoyer, and J{\'e}gou]{lample2019large}
Guillaume Lample, Alexandre Sablayrolles, Marc'Aurelio Ranzato, Ludovic Denoyer, and Herv{\'e} J{\'e}gou.
\newblock {Large Memory Layers with Product Keys}.
\newblock \emph{Advances in Neural Information Processing Systems}, 32, 2019.

\bibitem[Lelu \& Cadot(2019)Lelu and Cadot]{lelu2019evaluation}
Alain Lelu and Martine Cadot.
\newblock Evaluation of text clustering methods and their dataspace embeddings: an exploration.
\newblock \emph{International Federation of Classification Societies}, pp.\  131--139, 2019.

\bibitem[Lepikhin et~al.(2021)Lepikhin, Lee, Xu, Chen, Firat, Huang, Krikun, Shazeer, and Chen]{lepikhin2020gshard}
Dmitry Lepikhin, HyoukJoong Lee, Yuanzhong Xu, Dehao Chen, Orhan Firat, Yanping Huang, Maxim Krikun, Noam Shazeer, and Zhifeng Chen.
\newblock {GShard: Scaling Giant Models with Conditional Computation and Automatic Sharding}.
\newblock \emph{International Conference on Learning Representations}, 2021.

\bibitem[Linde et~al.(1980)Linde, Buzo, and Gray]{linde1980algorithm}
Yoseph Linde, Andres Buzo, and Robert Gray.
\newblock {An Algorithm for Vector Quantizer Design}.
\newblock \emph{IEEE Transactions on communications}, 28\penalty0 (1):\penalty0 84--95, 1980.

\bibitem[McQueen(1967)]{mcqueen1967some}
James~B. McQueen.
\newblock {Some Methods of Classification and Analysis of Multivariate Observations}.
\newblock \emph{Proc. of 5th Berkeley Symposium on Math. Stat. and Prob.}, pp.\  281--297, 1967.

\bibitem[Merity et~al.(2017)Merity, Xiong, Bradbury, and Socher]{merity2016pointer}
Stephen Merity, Caiming Xiong, James Bradbury, and Richard Socher.
\newblock {Pointer Sentinel Mixture Models}.
\newblock \emph{International Conference on Learning Representations}, 2017.

\bibitem[Nguyen et~al.(2024)Nguyen, Akbarian, Pham, Nguyen, Zhang, and Ho]{nguyen2024statistical}
Huy Nguyen, Pedram Akbarian, Trang Pham, Trang Nguyen, Shujian Zhang, and Nhat Ho.
\newblock Statistical advantages of perturbing cosine router in sparse mixture of experts.
\newblock \emph{arXiv preprint arXiv:2405.14131}, 6, 2024.

\bibitem[Nogueira Dos~Santos et~al.(2024)Nogueira Dos~Santos, Lee-Thorp, Noble, Chang, and Uthus]{dos2024memory}
Cicero Nogueira Dos~Santos, James Lee-Thorp, Isaac Noble, Chung-Ching Chang, and David Uthus.
\newblock {Memory Augmented Language Models through Mixture of Word Experts}.
\newblock \emph{North American Chapter of the Association for Computational Linguistics: Human Language Technologies}, pp.\  4425--4438, 2024.

\bibitem[Raffel et~al.(2020)Raffel, Shazeer, Roberts, Lee, Narang, Matena, Zhou, Li, and Liu]{raffel2020exploring}
Colin Raffel, Noam Shazeer, Adam Roberts, Katherine Lee, Sharan Narang, Michael Matena, Yanqi Zhou, Wei Li, and Peter~J. Liu.
\newblock {Exploring the Limits of Transfer Learning with a Unified Text-to-Text Transformer}.
\newblock \emph{Journal of Machine Learning Research}, 21\penalty0 (140):\penalty0 1--67, 2020.

\bibitem[Razavi et~al.(2019)Razavi, Van~den Oord, and Vinyals]{razavi2019generating}
Ali Razavi, Aaron Van~den Oord, and Oriol Vinyals.
\newblock {Generating Diverse High-Fidelity Images with VQ-VAE-2}.
\newblock \emph{Advances in Neural Information Processing Systems}, 32, 2019.

\bibitem[Roller et~al.(2021)Roller, Sukhbaatar, and Weston]{roller2021hash}
Stephen Roller, Sainbayar Sukhbaatar, and Jason Weston.
\newblock {Hash Layers For Large Sparse Models}.
\newblock \emph{Advances in Neural Information Processing Systems}, 34:\penalty0 17555--17566, 2021.

\bibitem[Salton et~al.(1983)Salton, Fox, and Wu]{salton1983extended}
Gerard Salton, Edward~A. Fox, and Harry Wu.
\newblock {Extended Boolean Information Retrieval}.
\newblock \emph{Communications of the ACM}, 26\penalty0 (11):\penalty0 1022--1036, 1983.

\bibitem[Shazeer(2020)]{shazeer2020glu}
Noam Shazeer.
\newblock {GLU Variants Improve Transformer}.
\newblock \emph{arXiv preprint arXiv:2002.05202}, 2020.

\bibitem[Shazeer et~al.(2017)Shazeer, Mirhoseini, Maziarz, Davis, Le, Hinton, and Dean]{shazeer2017outrageously}
Noam Shazeer, Azalia Mirhoseini, Krzysztof Maziarz, Andy Davis, Quoc Le, Geoffrey Hinton, and Jeff Dean.
\newblock {Outrageously Large Neural Networks: The Sparsely-Gated Mixture-of-Experts Layer}.
\newblock \emph{International Conference on Learning Representations}, 2017.

\bibitem[Shrivastava \& Li(2014)Shrivastava and Li]{shrivastava2014asymmetric}
Anshumali Shrivastava and Ping Li.
\newblock Asymmetric lsh (alsh) for sublinear time maximum inner product search (mips).
\newblock \emph{Advances in Neural Information Processing Systems}, 27, 2014.

\bibitem[Sivic \& Zisserman(2003)Sivic and Zisserman]{sivic2003video}
Sivic and Zisserman.
\newblock {Video Google: A text retrieval approach to object matching in videos}.
\newblock \emph{International Conference on Computer Vision}, pp.\  1470--1477, 2003.

\bibitem[Su et~al.(2024{\natexlab{a}})Su, Ahmed, Lu, Pan, Bo, and Liu]{su2024roformer}
Jianlin Su, Murtadha Ahmed, Yu~Lu, Shengfeng Pan, Wen Bo, and Yunfeng Liu.
\newblock {RoFormer: Enhanced transformer with Rotary Position Embedding}.
\newblock \emph{Neurocomputing}, 568:\penalty0 127063, 2024{\natexlab{a}}.

\bibitem[Su et~al.(2024{\natexlab{b}})Su, Lin, Bai, Wu, Xiong, Lian, Ma, Chen, Ding, Zhou, et~al.]{su2024maskmoe}
Zhenpeng Su, Zijia Lin, Xue Bai, Xing Wu, Yizhe Xiong, Haoran Lian, Guangyuan Ma, Hui Chen, Guiguang Ding, Wei Zhou, et~al.
\newblock {MaskMoE: Boosting Token-Level Learning via Routing Mask in Mixture-of-Experts}.
\newblock \emph{arXiv preprint arXiv:2407.09816}, 2024{\natexlab{b}}.

\bibitem[Tang et~al.(2025)Tang, Li, Liu, Guo, Zhou, Wang, Han, Yu, Li, Zang, et~al.]{tang2025pangu}
Yehui Tang, Xiaosong Li, Fangcheng Liu, Wei Guo, Hang Zhou, Yaoyuan Wang, Kai Han, Xianzhi Yu, Jinpeng Li, Hui Zang, et~al.
\newblock {Pangu Pro MoE: Mixture of Grouped Experts for Efficient Sparsity}.
\newblock \emph{arXiv preprint arXiv:2505.21411}, 2025.

\bibitem[Van Den~Oord et~al.(2017)Van Den~Oord, Vinyals, et~al.]{van2017neural}
Aaron Van Den~Oord, Oriol Vinyals, et~al.
\newblock {Neural Discrete Representation Learning}.
\newblock \emph{Advances in Neural Information Processing Systems}, 30, 2017.

\bibitem[Vaswani et~al.(2017)Vaswani, Shazeer, Parmar, Uszkoreit, Jones, Gomez, Kaiser, and Polosukhin]{vaswani2017attention}
Ashish Vaswani, Noam Shazeer, Niki Parmar, Jakob Uszkoreit, Llion Jones, Aidan~N Gomez, {\L}ukasz Kaiser, and Illia Polosukhin.
\newblock {Attention Is All You Need}.
\newblock \emph{Advances in Neural Information Processing Systems}, 30, 2017.

\bibitem[Wang et~al.(2022)Wang, Hou, Wang, Miao, Wu, Chen, Xia, Chi, Zhao, Liu, et~al.]{wang2022neural}
Yujing Wang, Yingyan Hou, Haonan Wang, Ziming Miao, Shibin Wu, Qi~Chen, Yuqing Xia, Chengmin Chi, Guoshuai Zhao, Zheng Liu, et~al.
\newblock {A Neural Corpus Indexer for Document Retrieval}.
\newblock \emph{Advances in Neural Information Processing Systems}, 35:\penalty0 25600--25614, 2022.

\bibitem[Williams et~al.(2020)Williams, Ringer, Ash, MacLeod, Dougherty, and Hughes]{williams2020hierarchical}
Will Williams, Sam Ringer, Tom Ash, David MacLeod, Jamie Dougherty, and John Hughes.
\newblock {Hierarchical Quantized Autoencoders}.
\newblock \emph{Advances in Neural Information Processing Systems}, 33:\penalty0 4524--4535, 2020.

\bibitem[Xie et~al.(2023)Xie, Huang, Chen, and Wei]{xie2023moec}
Yuan Xie, Shaohan Huang, Tianyu Chen, and Furu Wei.
\newblock {MoEC: Mixture of Expert Clusters}.
\newblock \emph{AAAI Conference on Artificial Intelligence}, 37\penalty0 (11):\penalty0 13807--13815, 2023.

\bibitem[Zhou et~al.(2022)Zhou, Lei, Liu, Du, Huang, Zhao, Dai, Le, Laudon, et~al.]{zhou2022mixture}
Yanqi Zhou, Tao Lei, Hanxiao Liu, Nan Du, Yanping Huang, Vincent Zhao, Andrew~M Dai, Quoc~V Le, James Laudon, et~al.
\newblock {Mixture-of-Experts with Expert Choice Routing}.
\newblock \emph{Advances in Neural Information Processing Systems}, 35:\penalty0 7103--7114, 2022.

\bibitem[Zhuang et~al.(2022)Zhuang, Ren, Shou, Pei, Gong, Zuccon, and Jiang]{zhuang2022bridging}
Shengyao Zhuang, Houxing Ren, Linjun Shou, Jian Pei, Ming Gong, Guido Zuccon, and Daxin Jiang.
\newblock {Bridging the Gap Between Indexing and Retrieval for Differentiable Search Index with Query Generation}.
\newblock \emph{arXiv preprint arXiv:2206.10128}, 2022.

\end{thebibliography}
\bibliographystyle{tmlr}

\newpage

\onecolumn
\appendix
\addcontentsline{toc}{section}{Appendix}
\vspace*{\fill}
{ \centering\part{{\huge{Appendix}}} \parttoc }
\vspace*{\fill}
\newpage

\section{Full Optimization Loop} \label{appx:optimization_loop}

\begin{figure}
    \centering
    \includegraphics[width=0.35\textwidth]{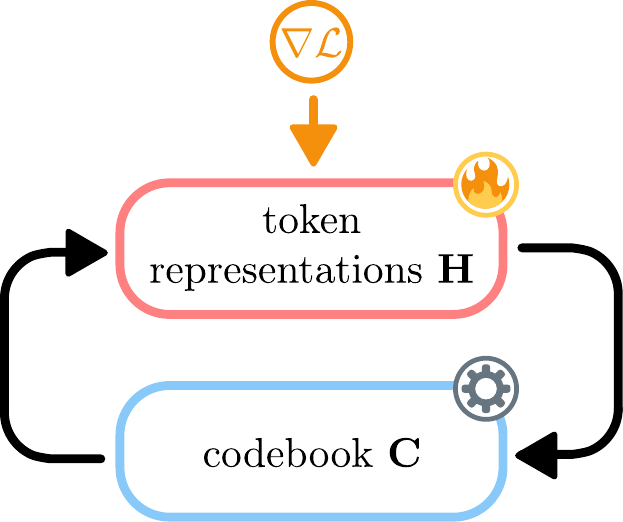}
    \caption{\textbf{Feedback Loop}. \ourmethod{} forms a bi-level optimization loop between gradient-learned token representations and a codebook updated via adaptive clustering.}
    \label{fig:feedback_loop}
\end{figure}

\Cref{alg:full_optimization_loop} summarizes the full training procedure of
\ourmethod{}. The algorithm can be viewed as a bi-level optimization loop between
gradient-trained model parameters and a gradient-free adaptive codebook. Let
$\theta$ denote all parameters trained by backpropagation, including transformer
parameters, expert FFN parameters, and expert centroids $\Wb$. In contrast, the
codebook $\Cb$ and its exponential-moving-average statistics $(n,\Mb)$ are
auxiliary router state and are not updated by the optimizer.

Importantly, the codeword-to-expert shortlists are \emph{not} recomputed during
the forward pass. Instead, the forward pass uses the currently cached shortlists.
The cache is refreshed only after an optimizer step, using the updated expert
centroids and the updated codebook.

\begin{figure}[t]
\centering
\begin{minipage}{0.945\textwidth}
\begin{algorithm}[H] 
\DontPrintSemicolon
\LinesNumbered
\caption{Full Optimization Loop for \ourmethod{}}
\label{alg:full_optimization_loop}

\KwIn{
Training data $\Dcal$; initial parameters $\theta_0$; initial codebook $\Cb_0$;
EMA counts $n_0$; EMA sums $\Mb_0$; optimizer $\texttt{optim}$; training horizon
$T$; gradient accumulation steps $A$; shortlist size $M$; top-$K$
}
\KwOut{Trained parameters $\theta_T$ and codebook $\Cb_T$}

\BlankLine

Initialize empty shortlist cache $\{L_g\}_{g=1}^G \leftarrow \varnothing$\;

\BlankLine

\For{$t \leftarrow 0,1,\dots,T-1$}{
    $\texttt{optim.zero\_grad}()$\;

    \BlankLine

    \For{$a \leftarrow 1,\dots,A$}{
        $\Dcal_{\mathrm{b}} \sim \Dcal$
        \nicecomment{sample micro-batch}

        \BlankLine
        Compute token representations
        $\Hb_t = \{\hb_s\}_{s=1}^{S}$
        using parameters $\theta_t$\;

        \BlankLine
        \nicefullcomment{gradient-free codebook update}\;
        $(\Cb_t,n_t,\Mb_t,\{g_s\}_{s=1}^{S})
        \leftarrow
        \textsc{AdaptiveSphericalKMeans}(\Hb_t,\Cb_t,n_t,\Mb_t)$ \;
        \nicefullcomment{codeword assignment} \;
        \For{$s \leftarrow 1$ \KwTo $S$}{
          $g_s \leftarrow \arg\max_{g \in [G]} \cossim (\hb_s, \cb_g)$
        }

        \BlankLine
        \nicefullcomment{normalize expert centroids for routing}\;
        $\wb_e \leftarrow \Pi_{\mathbb{S}^d}(\wb_e)$,
        \quad $\forall e\in[E]$\;

        \BlankLine
        \nicefullcomment{lazily construct codeword-to-expert shortlists}\;
        \If{shortlist cache is empty}{
            \For{$g \leftarrow 1$ \KwTo $G$}{
                $\etab_1 \sim \Ncal(\zerob, \epsilon^2 \Ib)$ \;
                $L_g
                \leftarrow
                \mathrm{TopM}_{e\in[E]}
                (\langle \cb_g,\wb_e\rangle + \etab_1)$
                \nicecomment{cache integer expert indices}
            }
        }

        \BlankLine
        \nicefullcomment{fine scoring within cached shortlists}\;
        \For{$s \leftarrow 1$ \KwTo $S$}{
            $\etab_2 \sim \Ncal(\zerob, \epsilon^2 \Ib)$ \\
              $T_s, \zb_s \leftarrow \mathrm{TopK}_{e \in L_{g_s}}\big(\langle \hb_s, \wb_e \rangle + \etab_2 \big)$
            
        }

        \BlankLine
        Compute prediction loss $\Lcal_{\mathrm{LM}}$\;
        Compute load balancing $\Lcal_{\mathrm{router}}$\;
        $\Lcal
        \leftarrow
        \Lcal_{\mathrm{LM}}
        +
        \Lcal_{\mathrm{router}}$\;

        Backpropagate $\Lcal/A$
        \nicecomment{gradients update $\theta$, not $\Cb_t$}
    }

    \BlankLine
    \nicefullcomment{gradient update of model parameters}\;
    $\theta_{t+1} \leftarrow \texttt{optim.step}(\theta_t)$\;

    \BlankLine
    \nicefullcomment{invalidate shortlist cache after optimizer step}\;
    $\{L_g\}_{g=1}^{G} \leftarrow \varnothing$\;
}

\BlankLine
\Return{$\theta_T,\Cb_T$}
\end{algorithm}
\end{minipage}
\end{figure}

\section{Proof for \texorpdfstring{\cref{prop:mass_preservation}}{Proposition 1}} \label{appx:proof}

\begin{proof}
Let $\cb = c(\hb)$ denote the assigned codeword and define
\[
\epsilon(\hb) \;\coloneqq\; \|\hb-\cb\|.
\]
Further, let
\[
L(\cb) \;=\; \mathrm{TopM}_{e\in[E]} \langle \cb,\wb_e\rangle
\]
denote the shortlist induced by the codeword. By definition,
\[
\mathrm{MassRecall}(\hb)
\;=\;
\sum_{e\in L(\cb)} \pi_e(\hb)
\;=\;
\frac{\sum_{e\in L(\cb)} \exp\{z_e(\hb)\}}
{\sum_{j=1}^E \exp\{z_j(\hb)\}}.
\]

We first compare token and codeword logits. By the Cauchy-Schwarz inequality, for any expert $e\in[E]$,
\[
|z_e(\hb)-z_e(\cb)|
\;=\;
|\langle \wb_e,\hb-\cb\rangle|
\;\le\;
\|\wb_e\|\,\|\hb-\cb\|
\;\le\;
\epsilon(\hb),
\]
where the last step uses the assumption $\|\wb_e\|\le 1$. Hence, for every $e\in[E]$,
\[
z_e(\hb)\;\ge\; z_e(\cb)-\epsilon(\hb)
\qquad\text{and}\qquad
z_e(\hb)\;\le\; z_e(\cb)+\epsilon(\hb).
\]
Exponentiating yields
\[
\exp\{z_e(\hb)\}
\;\ge\;
\exp(-\epsilon(\hb))\,\exp\{z_e(\cb)\},
\]
and
\[
\exp\{z_e(\hb)\}
\;\le\;
\exp(\epsilon(\hb))\,\exp\{z_e(\cb)\}.
\]

Applying the lower bound in the numerator and the upper bound in the denominator gives
\[
\mathrm{MassRecall}(\hb)
=
\frac{\sum_{e\in L(\cb)} \exp\{z_e(\hb)\}}
{\sum_{j=1}^E \exp\{z_j(\hb)\}}
\;\ge\;
\frac{
\exp(-\epsilon(\hb))
\sum_{e\in L(\cb)} \exp\{z_e(\cb)\}
}{
\exp(\epsilon(\hb))
\sum_{j=1}^E \exp\{z_j(\cb)\}
}.
\]
Therefore,
\[
\mathrm{MassRecall}(\hb)
\;\ge\;
\exp(-2\epsilon(\hb))
\frac{\sum_{e\in L(\cb)} \exp\{z_e(\cb)\}}
{\sum_{j=1}^E \exp\{z_j(\cb)\}}.
\]

It remains to identify the fraction on the right-hand side. By the definition of the full routing distribution at the codeword,
\[
\pi_e(\cb)
=
\frac{\exp\{z_e(\cb)\}}
{\sum_{j=1}^E \exp\{z_j(\cb)\}},
\]
hence
\[
\frac{\sum_{e\in L(\cb)} \exp\{z_e(\cb)\}}
{\sum_{j=1}^E \exp\{z_j(\cb)\}}
=
\sum_{e\in L(\cb)} \pi_e(\cb)
=
1-\sum_{e\notin L(\cb)} \pi_e(\cb).
\]
Using the definition
\[
\rho_M(\cb)
\;\coloneqq\;
\sum_{e \in L(\cb)} \pi_e(\cb),
\]
we obtain
\[
\mathrm{MassRecall}(\hb)
\;\ge\;
\exp(-2\epsilon(\hb)) \, \rho_M(\cb),
\]
which proves the desired result.
\end{proof}

\section{Further Implementation Details} \label{appx:implementation_details}

\paragraph{Overview.}
\Cref{lst:AIRrouter_pseudocode} shows a simplified PyTorch-style implementation of the proposed AIRRouter. 
The goal of the router is to reduce routing cost by restricting the set of experts considered for each token 
using a vector-quantization (VQ) based coarse-to-fine selection strategy.

\paragraph{Step 1: Shortlist selection.}
Each token representation $\hb \in \mathbb{R}^H$ is assigned to a shortlist centroid using a vector-quantization module 
(\texttt{AdaptiveKmeans}). This produces a discrete shortlist index per token as well as a vector quantization loss that 
encourages balanced usage of shortlists and stable codebook learning.

\paragraph{Step 2: Experts per shortlist.}
For each shortlist centroid, a fixed subset of experts is selected. This is implemented by computing similarities 
between shortlist centroids and expert feature vectors, followed by a top-$M$ selection. 
This step defines a coarse shortlist of candidate experts per shortlist.

\paragraph{Step 3: Candidate scoring.}
Each token is only scored against experts belonging to its selected shortlist. 
This avoids computing scores for all experts and reduces complexity from $\mathcal{O}(TE)$ to 
$\mathcal{O}(TM)$, where $T$ is the number of tokens, $E$ the total number of experts, and $M \ll E$.

\paragraph{Step 4: Final expert selection.}
From the candidate set, the router performs a standard top-$K$ selection per token. 
The selected experts and their normalized softmax weights define the routing decision.

\paragraph{Omitted components.}
For clarity, \Cref{lst:AIRrouter_pseudocode} omits several implementation details that are present in the 
full version used in experiments:
\begin{itemize}
    \item load-balancing losses for experts and shortlists,
    \item expert-choice routing with capacity constraints,
    \item jitter noise for exploration,
    \item caching of shortlist-to-expert assignments,
    \item chunked computation for memory efficiency,
    \item overlap estimation with exact routing,
    \item logging and diagnostic metrics.
\end{itemize}

These components do not change the conceptual routing mechanism, but are included in the full implementation 
for improved training stability and scalability.

\begin{listing}
\label{lst:AIRrouter_pseudocode}
\begin{codebox}
\begin{minted}{python}
import torch
import torch.nn as nn


class AIRRouter(nn.Module):
    """Ultra-minimal readable version."""
    def __init__(self, hidden_size, num_experts, num_shortlists,
                 top_k=2, experts_per_shortlist=128):
        super().__init__()
        self.top_k = top_k
        self.experts_per_shortlist = experts_per_shortlist

        self.shortlist_codebook = AdaptiveKmeans(
            num_embed=num_shortlists,
            embed_dim=hidden_size
        )

        self.expert_features = nn.Parameter(
            torch.empty(num_experts, hidden_size)
        )
        nn.init.xavier_uniform_(self.expert_features)

    def forward(self, hidden_states):
        # Flatten tokens
        B, L, H = hidden_states.shape
        hidden_states = hidden_states.reshape(-1, H)          # [T, H]
        
        # ---- 1) Select shortlist per token (VQ) ----
        shortlist_idx = self.shortlist_codebook(hidden_states)
        
        # ---- 2) Top experts per shortlist ----
        shortlist_centroids = self.shortlist_codebook.get_embedding_weights()
        shortlist_expert_scores = shortlist_centroids @ self.expert_features.t()
        top = torch.topk(shortlist_expert_scores,
                         k=self.experts_per_shortlist, dim=-1)
        top_experts_per_shortlist = top.indices               # [S, M]

        # ---- 3) Score token against shortlist candidates ----
        shortlist_idx = shortlist_idx.view(-1)
        cand_experts = top_experts_per_shortlist[shortlist_idx]
        cand_features = self.expert_features[cand_experts]    # [T, M, H]
        scores = torch.einsum("th,tmh->tm", hidden_states, cand_features)
        
        # ---- 4) Final top-$K$ experts ----
        top_scores, top_pos = torch.topk(scores, k=self.top_k, dim=-1)
        top_experts = cand_experts.gather(1, top_pos)         # [T, k]

        # ---- 5) Normalize routing weights ----
        top_weights = torch.softmax(top_scores, dim=-1)       # [T, k]

        return top_weights, top_experts
\end{minted}
\caption{Simplified PyTorch-like code for the AIR-Router.}
\end{codebox}
\end{listing}

\section{FLOP Analysis}

We estimate computational cost using \texttt{torch.utils.flop\_counter} and extend the default registry with custom formulas for operations that are either not covered or insufficiently modeled by PyTorch. All results in the paper report analytical FLOP counts obtained from traced operator graphs.

\paragraph{Counting convention.}
We count one floating-point addition, subtraction, multiplication, division, exponential, logarithm, or square root as one FLOP. Elementwise operators therefore contribute one FLOP per output element. Our goal is consistency across methods rather than exact hardware-level instruction counts.

\subsection{Elementwise and Reduction Operations}
For standard elementwise arithmetic (\texttt{add}, \texttt{sub}, \texttt{mul}, \texttt{div}) we count
\begin{equation}
\mathrm{FLOPs} = \mathrm{numel}(x).
\end{equation}
For common nonlinearities we use fixed per-element costs:
\begin{align}
\texttt{exp},\ \texttt{log},\ \texttt{sqrt},\ \texttt{rsqrt} &: 1,\\
\texttt{sigmoid},\ \texttt{silu} &: 3,\\
\texttt{gelu} &: 6.
\end{align}
Reductions are approximated as
\begin{align}
\texttt{mean} &: \mathrm{numel}(x) + 1,\\
\texttt{sum},\ \texttt{var\_mean} &: 2\,\mathrm{numel}(x).
\end{align}

\subsection{Softmax}
For softmax over dimension size $d_{\mathrm{sm}}$ with $N=\mathrm{numel}(x)$ we use
\begin{equation}
\mathrm{FLOPs} = 2N + \frac{N}{d_{\mathrm{sm}}},
\end{equation}
corresponding to one exponential and one division per element plus the reduction cost. The backward pass is approximated as $5\,\mathrm{numel}(x)$ FLOPs.

\subsection{Normalization Layers}
Let $d$ denote the normalized dimension and $V$ the number of normalized vectors.

\paragraph{LayerNorm.}
\begin{equation}
\mathrm{FLOPs} = V \,(4d + d + \mathbbm{1}_{\text{weight}} d + \mathbbm{1}_{\text{bias}} d).
\end{equation}
The backward pass is approximated as $8 V d$.

\paragraph{RMSNorm.}
\begin{equation}
\mathrm{FLOPs} = V \,(4d + \mathbbm{1}_{\text{weight}} d),
\end{equation}
with the same backward approximation $8 V d$.

\paragraph{BatchNorm.}
For $N=\mathrm{numel}(x)$ and $C$ channels we use
\begin{equation}
\mathrm{FLOPs} = (2N + 2C) + pN,
\end{equation}
where $p=2$ plus optional affine operations.

\subsection{Attention}
For scaled dot-product attention with query shape $(b,h,s_q,d)$ and key length $s_k$, we count
\begin{equation}
\mathrm{FLOPs}
= 4 b h s_q s_k d
+ 2 b h s_q s_k ,
\end{equation}
covering $QK^\top$, softmax, and attention--value multiplication.  
This formula is applied to both standard and fused attention kernels.

\subsection{Routing and Indexing Operations}
We also account for routing-related operators.

\paragraph{top-$K$.}
For selection over size $n$ with batch size $B$:
\begin{equation}
\mathrm{FLOPs} = B\, n \log_2(k+1),
\end{equation}
corresponding to an $O(n\log k)$ approximation.

\paragraph{Gather and scatter.}
For \texttt{gather}, \texttt{index\_add}, \texttt{scatter}, and related operators, we count one unit per affected element:
\begin{equation}
\mathrm{FLOPs} = \mathrm{numel}(\text{affected elements}).
\end{equation}

\section{Experimental Setup} \label{appx:setup}

\subsection{Data} \label{appx:data}

\textbf{WikiText-103}~\citep{merity2016pointer} is a language modeling dataset derived from
verified \emph{Good} and \emph{Featured} articles on Wikipedia. It contains
approximately $103$ million tokens in total, with about $103$M tokens for training,
$217$K for validation, and $245$K for testing. The dataset preserves the original
article structure and long-range dependencies, making it suitable for evaluating
long-context language modeling performance.

\textbf{C5}~\citep{raffel2020exploring} is a large-scale web text corpus constructed as an extension of the
Colossal Clean Crawled Corpus (C4). It contains approximately $87$ billion tokens after filtering and deduplication. Similar to C4, the data is obtained from Common Crawl and cleaned using heuristic filtering rules
to remove low-quality, boilerplate, and non-natural language content.

\textbf{OpenWebText2}~\citep{gao2020pile,biderman2022datasheet} is a large-scale English web text corpus and a major component of \textit{The Pile}. It comprises approximately $65$~GiB of raw text (corresponding to roughly $15$--$20$ billion tokens), collected from outbound Reddit links and filtered to resemble the distribution of high-quality web text.

\subsection{Training}

All runs use AdamW with $\beta_1=0.9$, $\beta_2=0.95$, $\epsilon=10^{-8}$,
weight decay $0.1$, gradient clipping at $1.0$, seed $42$, and best-checkpoint
selection by validation loss (lower is better). Logging is performed every 500
steps, with at most 2 checkpoints retained.

\resizebox{\textwidth}{!}{
\centering
\small
\setlength{\tabcolsep}{4pt}
\begin{tabular}{llrrrrrrlllrl}
\hline
Dataset & Size/Config & Block & Eff.\ BS/dev & Eval Strat. & Eval Freq. & LR Sched. & Peak LR & Prec. \\
\hline
OpenWebText2 & small  & 1024 & 16 & steps & 500 steps & linear (5\% warmup) & $3\mathrm{e}{-4}$ & fp16 \\
OpenWebText2 & medium & 1024 & 16 & steps & 500 steps & linear (5\% warmup) & $3\mathrm{e}{-4}$ & fp16 \\
OpenWebText2 & large  & 1024 & 32 & steps & 500 steps & linear (5\% warmup) & $3\mathrm{e}{-4}$ & fp16 \\
\hline
C5 & small  & 1024 & 16 & steps & 500 steps & linear (5\% warmup) & $3\mathrm{e}{-4}$ & fp16 \\
C5 & medium & 1024 & 16 & steps & 500 steps & linear (5\% warmup) & $3\mathrm{e}{-4}$ & fp16 \\
C5 & large  & 1024 & 32 & steps & 500 steps & linear (5\% warmup) & $3\mathrm{e}{-4}$ & fp16 \\
\hline
WikiText-103 & small  & 256 & 16 & epoch & every epoch & linear (5\% warmup) & $3\mathrm{e}{-4}$ & fp16 \\
WikiText-103 & medium & 256 & 16 & epoch & every epoch & linear (5\% warmup) & $3\mathrm{e}{-4}$ & fp16 \\
\hline
\end{tabular}
}

\subsection{Architectures}

Across all three scales, we maintain a consistent foundational setup to ensure fair comparisons. Every model utilizes the LLaMA-3 tokenizer with a vocabulary size of 128,256 and is trained with a maximum sequence length of 2048 tokens. We employ Rotary Positional Embeddings~(\citet{su2024roformer};~RoPE) with a base frequency of $\theta = 500,000$ to handle positional information, and we tie the token embedding weights with the pre-softmax output projection to improve parameter efficiency. Furthermore, each variant leverages Grouped-Query Attention~(\citet{ainslie2023gqa};~GQA) with a 4:1 ratio of query heads to key/value heads, optimizing inference throughput without sacrificing representational power. 

The primary axes of scaling across our models are depth (number of hidden layers), width (hidden dimension $d_{\text{model}}$ and FFN intermediate dimension $d_{\text{ffn}}$), and attention capacity. The specific hyperparameter configurations for each scale are detailed below:

\begin{center}
\renewcommand{\arraystretch}{1.2}
\begin{tabular}{l c c c}
    \toprule
    \textbf{Hyperparameter} & \textbf{Small} & \textbf{Medium} & \textbf{Large} \\
    \midrule
    Hidden Dimension ($d_{\text{model}}$) & 256 & 512 & 768 \\
    Intermediate Dimension ($d_{\text{ffn}}$) & 768  & 1536 & 2048 \\
    Number of Layers & 16 & 24 & 24 \\
    Attention Heads ($n_{\text{heads}}$) & 4 & 8 & 12 \\
    Key/Value Heads ($n_{\text{kv\_heads}}$) & 1 & 2 & 4 \\
    \midrule
    Vocabulary Size & \multicolumn{3}{c}{128,256} \\
    Max Sequence Length & \multicolumn{3}{c}{2048} \\
    RoPE Base ($\theta$) & \multicolumn{3}{c}{$5\times 10^{5}$} \\
    Tied Embeddings & \multicolumn{3}{c}{True} \\
    \bottomrule
\end{tabular}
\end{center}

\subsection{Hyperparameters} \label{appx:hyperparameters}

\subsubsection{Scale-invariant Hyperparameters}
\paragraph{\ourmethod{}.} Across all experiments, we set $\gamma=0.95$, $\tau=1.0$, load balancing weight $\lambda=5 \cdot 10^{-5}$ and noise variance $\epsilon=0.01$. We stress that these parameters are used \textit{across all experiments} without separate tuning.

\paragraph{Hierarchical Router.} Analogously to \ourmethod{}, we set $\epsilon=0.01$ and apply it at both the cluster selection and expert selection level.

\paragraph{Standard Granular.} Across all experiments, we set $\epsilon=0.01$ and use dropless expert-choice gating.

\paragraph{Standard Coarse.} Across all experiments, we set $\epsilon=0.01$ and use dropless expert-choice gating.

\subsubsection{MoE Configurations} \label{appx:baselines}

To ensure fair and meaningful comparisons across all MoE routing methods, we enforce a
set of rules that govern how each method's hyperparameters are set relative to one
another. These rules are applied uniformly across all model sizes.

\paragraph{Notation.}
Let $E$ denote the total number of experts, $K$ the number of experts activated per token
(\emph{active-$K$}), and $l$ the number of coarse-level candidates selected before fine-grained
routing (needed for the hierarchical router, \ourmethod{} always has $l=1$). For the PEER router, which maintains $P$ independent routing heads, the effective active-$K$ is $K \cdot P$. In practice, in line with the original paper~\citep{he2024mixture}, we choose $P=8$ and choose $K$ such the total amount of active experts matches the amount of active experts of \ourmethod{} (that is, $K_{\mathrm{PEER}} = \frac{K_{\mathrm{AIR}}}{P}$). For each model size, we use the same with a fixed effective active-$K$ budget, specifically $K=512$.

\paragraph{Intra-method constraints.}
Each router must satisfy a set of structural feasibility conditions:
\begin{itemize}
  \item \textbf{Adaptive Inverted-Index (AIR) router.} The candidate pool must cover at least $K$
    experts: $l \cdot |\mathcal{E}_\text{shortlist}| \geq K$, where $|\mathcal{E}_\text{shortlist}|$ is
    the number of experts per VQ code. Additionally, no code may cover more experts than exist:
    $|\mathcal{E}_\text{shortlist}| \leq E$.
  \item \textbf{Hierarchical router.} Experts must partition evenly into $G$ clusters: $E \bmod
    G = 0$. The selected clusters must suffice to cover $K$ active experts:
    $l \cdot (E / G) \geq K$.
\end{itemize}

\paragraph{Cross-method fairness.}
All methods compared within the same experiment are constrained to be equivalent along
three dimensions:

\begin{enumerate}[label=\textbf{\Alph*.}]
  \item \textbf{Equal active-$K$.} The effective number of experts activated per token is
    held fixed across all granular methods. For PEER this is $K \cdot P$; for all other
    methods it is $K$ directly. This ensures that the total compute per token (in terms of
    expert FLOPs) is comparable.

  \item \textbf{Equal coarse structure.} The VQ router uses $G$ codebook shortlists and the
    hierarchical router uses $G$ clusters. We enforce this to be equal so that the coarseness
    of the two-stage selection process is matched.

  \item \textbf{Equal candidate pool.} Beyond equal active-$K$, we additionally match the
    number of candidate experts scored before final selection: $l \cdot |\mathcal{E}_\text{shortlist}|$
    for VQ and $l \cdot (E / G)$ for hierarchical routing.
\end{enumerate}

\paragraph{Coarse baseline equivalence.}
The standard coarse MoE baseline (top-1 routing over large experts) is constructed to be
parameter-equivalent to the granular model. Specifically, given granular active-$k$ and
$E$ granular experts each of intermediate size $d$, the coarse model uses $k=1$ with
$E_\text{coarse} = \lceil E / k \rceil$ experts each of intermediate size $d \cdot k$. This
ensures the active parameter count per forward pass is matched.

\subsection{Hardware} \label{appx:hardware}

All experiments are run on an internal HTCondor cluster. The individual experiments are run on the following hardware setups:

\resizebox{\textwidth}{!}{
\centering
\begin{tabular}{llrrrrrr}
\hline
Dataset & Size/Config & GPU Name & \# GPUs & \# CPUs & \# processes & host RAM  \\
\hline
OpenWebText2 & small & NVIDIA A100-SXM4-80GB & 4 & 16 & 4 & 120 \\
\hline
OpenWebText2 & medium & NVIDIA A100-SXM4-80GB & 4 & 16 & 4 & 120 \\
\hline
OpenWebText2 & large & NVIDIA H100 80GB HBM3 & 4 & 32 & 4 & 240 \\
\hline
C5 & small & NVIDIA A100-SXM4-80GB & 4 & 16 & 4 & 200 \\
\hline
C5 & medium & NVIDIA A100-SXM4-80GB & 4 & 16 & 4 & 200 \\
\hline
C5 & large & NVIDIA H100 80GB HBM3 & 4 & 32 & 4 & 240 \\
\hline
WikiText-103 & small & NVIDIA A100-SXM4-80GB & 4 & 16 & 4 & 120 \\
\hline
WikiText-103 & medium & NVIDIA A100-SXM4-80GB & 4 & 16 & 4 & 120 \\
\hline
\end{tabular}
}

\section{Additional Experimental Results} \label{appx:additional_experiments}

\subsection{Additional Metrics} \label{appx:additional_metrics}

The additional metrics, shown in~\cref{tab:combined_baselines}, provide more information about \ourmethod{}: the dying expert problem is effectively alleviated and routing entropy is similar to the routing entropy of other techniques.

\begin{longtable}{l c c c c}
\caption{Results across datasets and model sizes.}\label{tab:combined_baselines}\\
\toprule
Method & PPL $\downarrow$ & FLOPs $\downarrow$ & Dead Experts $\downarrow$ & Entropy $\uparrow$ \\
\midrule
\endfirsthead

\toprule
Method & PPL $\downarrow$ & FLOPs $\downarrow$ & Dead Experts $\downarrow$ & Entropy $\uparrow$ \\
\midrule
\endhead

\midrule
\multicolumn{5}{r}{\emph{Continued on next page}} \\
\endfoot

\bottomrule
\endlastfoot

\multicolumn{5}{l}{\textbf{Dataset: C5 \;|\; Size: small}} \\
\midrule
\underline{AIR} & \textbf{131.81} & 625.8P & 0.1\% & 9.51 \\
PEER & 145.32 & 626.5P & \textbf{0.0\%} & 9.96 \\
Hierarchical & 147.99 & 622.8P & \textbf{0.0\%} & \textbf{10.15} \\
Std.\ Coarse & 151.99 & \textbf{612.0P} & \textbf{0.0\%} & 3.90 \\
\addlinespace[0.5em]
\multicolumn{5}{l}{\textbf{Dataset: C5 \;|\; Size: medium}} \\
\midrule
\underline{AIR} & \textbf{30.39} & 4.1E & \textbf{0.0\%} & 10.09 \\
PEER & 31.60 & 4.1E & \textbf{0.0\%} & 10.16 \\
Hierarchical & 33.08 & 4.1E & \textbf{0.0\%} & \textbf{11.04} \\
Std.\ Coarse & 32.10 & \textbf{4.0E} & \textbf{0.0\%} & 4.16 \\
\addlinespace[0.5em]
\multicolumn{5}{l}{\textbf{Dataset: C5 \;|\; Size: large}} \\
\midrule
\underline{AIR} & \textbf{41.25} & 14.3E & \textbf{0.0\%} & 9.98 \\
PEER & 45.37 & 14.0E & \textbf{0.0\%} & 9.95 \\
Hierarchical & 45.65 & 14.4E & \textbf{0.0\%} & \textbf{11.05} \\
Std.\ Coarse & 43.13 & \textbf{13.2E} & \textbf{0.0\%} & 4.16 \\
\addlinespace[0.5em]
\multicolumn{5}{l}{\textbf{Dataset: Wikitext-103 \;|\; Size: small}} \\
\midrule
\underline{AIR} & \textbf{21.82} & \textbf{324.1P} & 0.1\% & 10.78 \\
PEER & 22.25 & 352.9P & \textbf{0.0\%} & 10.76 \\
Hierarchical & 22.55 & 350.5P & \textbf{0.0\%} & \textbf{10.97} \\
Std.\ Coarse & 23.84 & 342.6P & \textbf{0.0\%} & 4.28 \\
\addlinespace[0.5em]
\multicolumn{5}{l}{\textbf{Dataset: Wikitext-103 \;|\; Size: medium}} \\
\midrule
\underline{AIR} & \textbf{18.62} & 756.0P & \textbf{0.0\%} & 10.84 \\
PEER & 18.71 & \textbf{751.2P} & \textbf{0.0\%} & 10.68 \\
Hierarchical & 19.27 & 954.8P & \textbf{0.0\%} & \textbf{11.04} \\
Std.\ Coarse & 18.79 & 836.8P & \textbf{0.0\%} & 4.51 \\
\addlinespace[0.5em]
\multicolumn{5}{l}{\textbf{Dataset: OpenWebText2 \;|\; Size: small}} \\
\midrule
\underline{AIR} & \textbf{32.14} & 505.0P & 1.4\% & 10.02 \\
PEER & 33.10 & 505.6P & \textbf{0.0\%} & 10.25 \\
Hierarchical & 36.05 & 502.6P & \textbf{0.0\%} & \textbf{10.48} \\
Std.\ Coarse & 36.05 & \textbf{493.9P} & \textbf{0.0\%} & 4.11 \\
\addlinespace[0.5em]
\multicolumn{5}{l}{\textbf{Dataset: OpenWebText2 \;|\; Size: medium}} \\
\midrule
\underline{AIR} & \textbf{20.51} & 3.6E & \textbf{0.0\%} & 10.32 \\
PEER & 21.30 & \textbf{3.5E} & \textbf{0.0\%} & 10.19 \\
Hierarchical & 23.07 & \textbf{3.5E} & \textbf{0.0\%} & \textbf{11.00} \\
Std.\ Coarse & 21.25 & \textbf{3.5E} & \textbf{0.0\%} & 4.24 \\
\addlinespace[0.5em]
\multicolumn{5}{l}{\textbf{Dataset: OpenWebText2 \;|\; Size: large}} \\
\midrule
\underline{AIR} & \textbf{16.65} & \textbf{11.2E} & \textbf{0.0\%} & 10.33 \\
PEER & 17.88 & \textbf{11.2E} & \textbf{0.0\%} & 10.18 \\
Hierarchical & 18.17 & 11.3E & \textbf{0.0\%} & \textbf{11.01} \\
Std.\ Coarse & 16.98 & 11.3E & \textbf{0.0\%} & 4.27 \\
\end{longtable}

\subsection{Qualitative Differences in Expert Usage} \label{appx:expert_usage}

\begin{figure}
    \centering
    \includegraphics[width=0.88\textwidth]{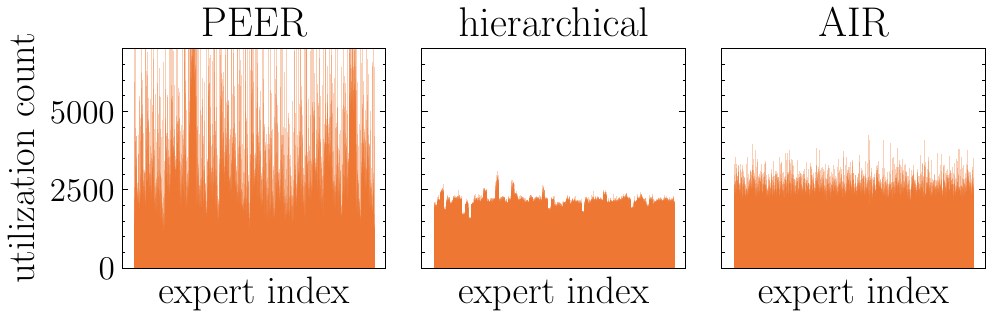}
    \caption{\textbf{Qualitative Comparison of Expert Usage.} Existing routing methods impose hard structural constraints on the expert selection mechanism, either via structural constraints (PEER) or grouping constraints. These constraints, in turn, lead to routing artifacts depicted in \textbf{(a)} and \textbf{(b)}, respectively. \ourmethod{}, in contrast, imposes no assumptions by computing an approximation of the top-$K$ expert scores whose quality is based on how well the internal representation cluster naturally. }
    \label{fig:expert_usage}
\end{figure}

The greatest difference between~\ourmethod{} and the considered baselines for granular MoE lies in the fact that \ourmethod{} does not impose restrictions, as can be seen in~\cref{fig:expert_usage}: Both PEER and the hierarchical router achieve FLOP-efficient routing via coupling neighboring expert indices via weight sharing (PEER) or grouping (hierarchical). \ourmethod{}, in contrast, does not impose such restrictions and therefore provides a more flexible model.

\end{document}